\documentclass{article}
\usepackage{arxiv}

\usepackage{algorithmic}
\usepackage{rotating}
\usepackage[caption=false,font=normalsize,labelfont=sf,textfont=sf]{subfig}
\usepackage{stfloats}
\usepackage{verbatim}
\usepackage{graphicx}%
\usepackage{multirow}%
\usepackage{amsmath,amssymb,amsfonts}%
\usepackage{amsthm}%
\usepackage{mathrsfs}%
\usepackage{textcomp, cancel}%
\usepackage{manyfoot}%
\usepackage{algorithm}%
\usepackage{listings}%
\usepackage{hyperref}       
\usepackage{url}            
\usepackage{booktabs}       
\usepackage{arydshln}
\usepackage{microtype}      
\usepackage{placeins}
\usepackage{longtable}
\usepackage{subcaption}
\usepackage{cite}
\usepackage{array}
\usepackage{multirow}
\usepackage{bm}

\def\BibTeX{{\rm B\kern-.05em{\sc i\kern-.025em b}\kern-.08em
    T\kern-.1667em\lower.7ex\hbox{E}\kern-.125emX}}
\usepackage{balance}

\newtheorem{theorem}{Theorem}
\newtheorem{proposition}[theorem]{Proposition}

\newcommand{\x}{\boldsymbol{x}}
\newcommand{\bx}{\boldsymbol{x}}
\newcommand{\y}{\boldsymbol{y}}
\newcommand{\by}{\boldsymbol{y}}
\newcommand{\z}{\boldsymbol{z}}
\newcommand{\e}{\boldsymbol{e}}
\newcommand{\bd}{\boldsymbol{d}}
\newcommand{\m}{\boldsymbol{m}}
\newcommand{\xot}{\hat{\x}_{0\vert t}}
\newcommand{\xt}{\x_{t}}
\newcommand{\K}{\boldsymbol{K}}
\newcommand{\D}{\boldsymbol{D}}
\newcommand{\I}{\boldsymbol{I}}
\newcommand{\eps}{\bm{\epsilon}}

\usepackage[dvipsnames]{xcolor}

\begin{document}

\title{Consecutive Posterior Fusion for Diffusive Recovery of Unobservable Image Structures}

\author{Elena Morotti\\
Department of Computer Science and Engineerin \\
University of Bologna \\
Bologna, 40126, Italy \\
\texttt{elena.morotti4@unibo.it} \and
Davide Evangelista \\
Department of Computer Science and Engineering\\
University of Bologna \\
Bologna, 40126, Italy \\
\texttt{davide.evangelista5@unibo.it} \And
Elena Loli Piccolomini \\
Department of Computer Science and Engineering\\
University of Bologna \\
Bologna, 40126, Italy \\
\texttt{elena.loli@unibo.it}}

\maketitle

\begin{abstract}
Solving severely ill-posed imaging inverse problems requires recovering image structures that are unobservable or weakly constrained by the measurements. Diffusion models provide expressive learned priors for inferring such missing information, while posterior sampling incorporates measurement consistency along the reverse process. 
Standard diffusion posterior samplers, however, rely on instantaneous measurement-aware estimates, without explicitly exploiting information carried by previous posterior corrections.

We introduce Consecutive Posterior Fusion Denoising Diffusion Null-Space Models (CPF-DDNM), an inference-time strategy that fuses consecutive measurement-aware estimates to improve the diffusive recovery of unobservable image structures, without requiring retraining or additional denoiser evaluations.
We instantiate this principle within DDNM, whose range/null-space decomposition reveals that consecutive fusion preserves the measurement-determined component while acting exclusively on the prior-driven null-space estimate.
We thus provide a geometric interpretation of CPF-DDNM and a local error analysis that characterizes the optimal time-dependent fusion coefficient, including the extrapolative regime.

Experiments on sparse-view and simulated low-dose computed tomography, as well as medical image super-resolution, show consistent improvements over DDNM and competitive performance against diffusion-based inverse solvers.
\end{abstract}

\keywords Diffusion models, medical image reconstruction, inverse problems, null-space methods, posterior fusion

\section{Introduction}

Imaging inverse problems aim to recover an unknown image from indirect
measurements related to it through a physical acquisition process. In many
practical imaging settings, these measurements may be incomplete, undersampled,
or affected by noise, making the reconstruction problem severely ill-posed \cite{bertero2021introduction}.
Such situations arise, for example, in computed tomography (CT), magnetic
resonance imaging (MRI), inpainting, and super-resolution.
In these settings, the acquisition process may not uniquely determine the image:
undersampling leaves components that are unobservable from the data, while
measurement noise further increases the uncertainty along weakly observable
directions of the forward operator. The quality of the reconstruction therefore
depends critically on the prior used to infer the missing or poorly constrained
information.

Consider the noiseless linear imaging  model $\y=\K\x$, where $\K$ is the forward operator modelling the imaging system, $\y$ contains the acquired data and $\x$ is the image to be reconstructed. Whenever $\K$ has a non-trivial null-space $\operatorname{Ker}(\K)$ and $\y\in\operatorname{Range}(\K)$, the set of data-consistent solutions is the affine space 
\begin{equation}\label{eq:model}
    \x = \K^\dagger\y+\z, \qquad \z\in\operatorname{Ker}(\K),
\end{equation}
where $\K^\dagger$ denotes the pseudo-inverse of the forward operator, $\K^\dagger\y$ is the minimum-norm solution determined by the measurements, and $\z$ is invisible to the acquisition operator. Infinitely many data-consistent reconstructions can thus be obtained by varying $\z$. In the presence of noise, this ambiguity remains, together with the additional uncertainty associated with small singular values of $\K$.

Classical variational methods resolve this ambiguity through handcrafted regularizers that promote properties such as sparsity, smoothness, or structural consistency \cite{engl1996regularization,hansen1998rank,hansen2006deblurring}. Deep learning provides an alternative route by learning image statistics directly from data.
In medical reconstruction, learned components have been employed as post-processing operators \cite{jin2017deep,morotti2021green}, embedded in unrolled optimization schemes \cite{gupta2018cnn,loli2023ctprintnet}, or combined with explicit model-based procedures \cite{morotti2026adaptive}.  More recently, diffusion generative models have emerged as expressive learned priors for inverse problems. Among the broad family of approaches employing diffusion models to drive the image reconstruction problem, in this work we focus on diffusion posterior sampling methods. These methods typically combine a pretrained generative prior with measurement-aware corrections throughout the reverse trajectory, thereby selecting plausible solutions that remain compatible with the observed data. This is done by adjusting the clean-image prediction $\xot$ obtained by the diffusion model at timestep $t$ using a measurement-aware term, producing a correction $\D_t$, which is then inserted into the reverse diffusion process in place of the original $\xot$. An important representative of the class of diffusion posterior samplers is the Denoising Diffusion Null-Space Model (DDNM) \cite{wangzero}, which is particularly relevant because it makes the separation between measured and prior-inferred information explicit by defining its correction as
\begin{equation}
    \D_t = \K^\dagger \y + (\I-\K^\dagger \K)\xot,
    \label{eq:Dt_intro}
\end{equation}
where $\K^\dagger\y$ is the component determined by the measurements and $(\I-\K^\dagger\K)\xot$ is the null-space completion inferred by the learned prior. This structure is especially relevant for severely undersampled medical imaging problems, such as sparse-view CT and high-factor super-resolution, whose solutions depend strongly on the recovery of unobservable image components.
However, a limitation of classical diffusion posterior sampling methods is that, particularly during the early timesteps, when noise dominates the reverse diffusion process, the correction terms $\D_t$ are volatile, sometimes causing the process to diverge towards an out-of-distribution solution. In our recent Lagged Multistep Posterior (LAMP) method \cite{evangelista2026improving}, we showed that consecutive measurement-aware estimates are highly correlated and may therefore be combined to reduce their variance and stabilize the reconstruction trajectory. While LAMP provides a general mechanism, based on a multistep approximation of an underlying differential equation, for exploiting this interaction, it does not investigate how consecutive posterior corrections interact with the specific structure of the underlying inverse problem.

In this work, we address this question within DDNM, whose explicit range/null-space decomposition makes it possible to identify where the use of consecutive posteriors acts.
To this aim, we define
\begin{align}\label{eq:fusion_intro}
    \widetilde{\D}_t
    := \D_t + \beta_t
    \left(\D_{t+\Delta t}-\D_t\right),
\end{align}
where $\D_{t+\Delta t}$ is the correction produced at the preceding reverse iteration and $\beta_t$ is the time-dependent fusion coefficient.
We refer to this general corrective strategy as Consecutive Posterior Fusion (CPF) and we denote its application to DDNM as CPF-DDNM. In the following, we show that consecutive DDNM corrections $\D_{t+\Delta t}$ and $\D_{t}$ share the same measurement-determined component $\K^\dagger\y$. Therefore, the CPF correction $\widetilde{\D}_t$ in Eq.~\eqref{eq:fusion_intro} preserves the component determined by the measurements and modifies only the prior-driven recovery of the unobservable image component.

The range/null-space structure of DDNM, in addition, provides a direct geometric interpretation of the fusion mechanism and allows the role of the fusion coefficient to be analyzed specifically in terms of null-space recovery, including both interpolative ($0 < \beta_t < 1$) and extrapolative ($\beta_t > 1$) choices.
In particular, we investigate time-dependent coefficients that can move beyond the segment connecting consecutive null-space estimates while leaving the measurement-determined component unaffected.


The main contributions of this work are summarized as follows:
\begin{itemize}
    \item We specialize consecutive posterior fusion to the range/null-space decomposition of DDNM. We show that the fusion preserves the measurement-determined component and modifies exclusively the completion inferred by the diffusion prior.

    \item We study the extrapolative regime, interpreting this as an over-relaxation within the null-space, and derive a local optimality condition showing when an extrapolative coefficient can minimize the expected completion error.

    \item We introduce time-dependent schedules for $\{ \beta_t \}_{t \in [0, T]}$, and evaluate CPF-DDNM on simulated sparse-view CT, low-dose CT, and medical image super-resolution.
    CPF-DDNM improves the original DDNM baseline across all experiments, without retraining the diffusion prior or adding denoiser evaluations.
    The results also demonstrate competitive or superior performance with respect to other state-of-the-art diffusion-based inverse solvers.
\end{itemize}

The remainder of the paper is organized as follows.
Section~\ref{sec:related_works} reviews diffusion-based methods for imaging inverse problems. Section~\ref{sec:method} introduces CPF-DDNM and presents its geometric interpretation and theoretical analysis. Section~\ref{sec:expsetup} describes the experimental protocol, and Section~\ref{sec:results} presents and discusses the results. Finally, Section~\ref{sec:conclusions} summarizes the main findings and outlines directions for future work.

\section{Related Works}\label{sec:related_works}
Diffusion models have been widely adopted as learned priors for imaging inverse problems. Existing approaches differ primarily in how measurement information is incorporated into the reverse process. DDRM \cite{kawar2022denoising} derives posterior updates for linear degradation operators through their spectral decomposition. DPS \cite{chung2023diffusion} introduces approximate likelihood gradients and can address noisy and nonlinear inverse problems. DDNM \cite{wangzero} enforces measurement consistency through an explicit range/null-space decomposition, whereas DiffPIR \cite{zhu2023denoising} embeds a pretrained diffusion denoiser within a plug-and-play half-quadratic splitting scheme. Although these approaches implement data consistency differently, they all generate a sequence of measurement-aware estimates along the reverse
trajectory.

To speed up the generation process or improve the stability of the diffusion trajectory, several works have recently investigated multistep discretizations of the reverse diffusion dynamics
\cite{liu2022pndm,lu2022dpmsolver++,zhang2023fast}. These approaches exploit information from multiple diffusion steps to construct higher-order approximations of the generative trajectory. By comparison, the temporal relation between consecutive \emph{measurement-aware} estimates in diffusion posterior samplers remains comparatively underexplored.

A recent step in this direction is LAMP \cite{evangelista2026improving}, which introduces a lagged temporal correction derived from a second-order discretization of the posterior dynamics. LAMP provides initial evidence that consecutive measurement-aware estimates contain useful information that is discarded by standard posterior-sampling updates. Its formulation is backbone-independent and was validated on DiffPIR and DDRM using a linear combination of consecutive estimates. However, due to the second-order discretization formulation, it always resulted in a convex combination of successive measurement-aware terms with coefficients $\beta_t \in [0,1]$. 
On the other side, CPF-DDNM  focuses on the extrapolative regime, where consecutive estimates define an over-relaxation direction within the null-space. Thus, the two methods exploit the same source of temporal information in different regimes and for different purposes.

Diffusion models have also attracted growing interest in medical image reconstruction, where they provide flexible learned priors for highly ill-posed acquisition problems
\cite{kazerouni2023diffusion,medical_diffusion_reconstruction_review}. Applications include sparse-view and low-dose CT, accelerated MRI, PET reconstruction, and multimodal imaging. In CT, diffusion-based methods have been proposed for sparse-view reconstruction and artifact reduction, including approaches that incorporate the acquisition model directly into the generative process \cite{patch_diffusion_ct,ct_sdm}. Related posterior-sampling strategies have been developed for accelerated MRI and joint PET--MRI reconstruction \cite{inr_diffusion_mri,joint_diffusion_pet_mri}. Most of these works focus on modality-specific conditioning mechanisms, data-consistency operators, or posterior-sampling rules. CPF-DDNM is complementary to these developments: it modifies how an existing DDNM solver uses consecutive measurement-aware estimates, without retraining or changing the underlying diffusion prior.

\section{Method}\label{sec:method}
In this section, we present the proposed posterior fusion strategy for diffusion-based medical image reconstruction. The proposed CPF-DDNM method builds on the null-space formulation of DDNM and replaces the posterior estimate computed at the current reverse iteration with a fusion of two consecutive posterior estimates. 

\subsection{Diffusion models for image generation}\label{ssec:ddim}

Diffusion models constitute one of the most effective paradigms for deep learning-based image generation. In the context of diffusion posterior sampling, we focus on Denoising Diffusion Implicit Models (DDIMs) \cite{DDIM}, whose sampling formulation provides the basis for the reconstruction methods considered in this work. We briefly review their generative mechanism to establish the notation used throughout the paper.

Let $\x_0$ be a sample from the distribution $p_0(\x)$ of clean images. Given two sequences $\{ \alpha_t \}_{t=1}^T$ and $\{ \sigma_t \}_{t=1}^T$ satisfying $\alpha_t^2 + \sigma_t^2 = 1$, respectively referred to as the signal-rate and noise-rate schedules, define the stochastic process $\{ \x_t \}_{t=1}^T$ as
\begin{align}\label{eq:ddim_forward}
    \x_t = \alpha_t \x_0 + \sigma_t \eps_t, \qquad \eps_t \sim \mathcal{N}(\boldsymbol{0}, \I).
\end{align}
When $\alpha_T \approx 0$, the terminal variable approximately follows a standard Gaussian distribution, i.e. $\x_T \sim \mathcal{N}(\boldsymbol{0}, \I)$. A neural network-based denoiser $\eps_\Theta(\x_t, t)$ is trained to predict the noise component $\eps_t$ from the noisy sample $\x_t$ by minimizing
\begin{align}\label{eq:loss_function}
    \mathcal{L}(\Theta) = \mathbb{E}_{\x_0 \sim p_0, t \sim \mathcal{U}[0, T], \eps_t \sim \mathcal{N}(\boldsymbol{0}, \I)} \left[ || \eps_\Theta(\x_t, t) - \eps_t ||_2^2 \right]
\end{align}
Once trained, the denoiser can be used to progressively generate samples from $p_0(\x)$ by reversing the process in Eq. \eqref{eq:ddim_forward}.

In particular, the clean sample can be approximated through the Tweedie estimate
\begin{align}\label{eq:ddim_xot}
    \xot := \frac{\x_t - \sigma_t \eps_\Theta(\x_t, t)}{\alpha_t},
\end{align}
which follows by isolating $\x_0$ in Eq. \eqref{eq:ddim_forward} and replacing the unknown noise $\eps_t$ with its neural-network estimate. The quantity $\xot$ therefore represents an estimate of the sample $\x_0$ based on the information available at timestep $t$. As the reverse process progresses from $T$ towards $0$, this estimate typically becomes increasingly detailed and coherent.

Starting from Eq. \eqref{eq:ddim_xot}, DDIM constructs the sample at the next reverse timestep by reintroducing noise according to the lower noise level $\sigma_{t-\Delta t}$:
\begin{align}\label{eq:ddim_backward}
    \x_{t-\Delta t} = \alpha_{t-\Delta t} \xot + \sigma_{t - \Delta t} \eps_\Theta(\x_t, t),
\end{align}
where $\Delta t$ denotes the temporal step size of the reverse process. By repeatedly applying Eq. \eqref{eq:ddim_xot} and Eq. \eqref{eq:ddim_backward}, starting from $\x_T \sim \mathcal{N}(\boldsymbol{0}, \I)$ and proceeding backwards in time until $\x_0$, DDIM generates a new sample from $p_0(\x)$. The parameter $\Delta t > 0$ determines the number of denoising iterations, since generating $\x_0$ from $\x_T$ requires $N = \frac{T}{\Delta t}$ reverse steps. A larger number of steps generally increases the computational cost while providing a finer discretization of the reverse trajectory and, potentially, a more accurate generated sample.

\subsection{Diffusion posterior samplers}\label{ssec:diffusion-posterior-sampler}
The generative process described in Section \ref{ssec:ddim} can produce high-quality samples, although it does not directly control which image is generated. Solving an inverse problem instead requires the generated image to remain compatible with the observed measurements. 
Diffusion posterior samplers address this requirement by introducing a measurement-aware correction at each reverse timestep, immediately after computing the clean estimate $\xot$.
This correction progressively guides the reverse trajectory towards consistency with the measurements, while the diffusion prior complements the available information by estimating the unobservable component $\z \in \operatorname{Ker}(\K)$ introduced in Eq.~\eqref{eq:model}.

Consider a pre-trained diffusion model $\eps_\Theta(\x_t, t)$, trained in an unsupervised manner on a dataset representative of the images to be processed. Given an initial iterate $\x_T \sim \mathcal{N}(\boldsymbol{0}, \I)$, the clean estimate $\xot$ is computed at each timestep according to Eq. \eqref{eq:ddim_xot}. Then, a measurement-aware correction operator $\mathcal{C}(\xot, t, \K, \y)$ is introduced, whose formulation depends on the selected posterior-sampling method. We call the corrected image $\D_t$, i.e. $\D_t := \mathcal{C}(\xot, t, \K, \y)$.
Replacing $\xot$ with $\D_t$ in Eq. \eqref{eq:ddim_backward} yields the diffusion posterior-sampling update
\begin{align}\label{eq:diffusion-posterior-step}
    \x_{t-\Delta t} = \alpha_{t-\Delta t} \D_t + \sigma_{t-\Delta t} \eps_\Theta(\x_t, t),
\end{align}
through which the diffusion trajectory is progressively conditioned on the acquired measurements. 
Iterating Eq.~\eqref{eq:diffusion-posterior-step} until $t=\Delta t$ yields the final reconstruction $\x_0$. 

Several posterior samplers can be interpreted within this measurement-correction framework, and the specific choice of $\mathcal{C}(\xot, t, \K, \y)$ determines the diffusion posterior-sampling method. In addition, the precise point at which the correction is introduced further differs across methods. 
Focusing on the DDNM method proposed in \cite{wangzero}, it adopts the correction described in Eq. \eqref{eq:Dt_intro}. Denoting the projection operator onto the kernel of $\K$ as $\mathcal{P}_{\operatorname{Ker}} := \I-\K^\dagger \K$ \cite{engl1996regularization}, the DDNM correction reads:
\begin{equation}
    \D_t = \K^\dagger \y + \mathcal{P}_{\operatorname{Ker}} \xot. 
    \label{eq:DDNM}
\end{equation}
For the noiseless setting, this expression provides a clear geometric interpretation of the correction of $\xot$:  
the DDNM preserves only the estimated null-space component $\mathcal{P}_{\operatorname{Ker}} \xot$, while it replaces the range-space component of $\xot$ with the measurement-determined term $\K^\dagger \y$. 
When $\K^\dagger$ is the exact Moore-Penrose pseudo-inverse and the measurements belong to the range of $\K$, this construction enforces exact data consistency at every corrected estimate. \\
Additionally, this explicit separation also makes the role of the diffusion model particularly transparent, since the learned prior affects only the image components that cannot be recovered directly from the measurements. 

Applying the same correction directly to noisy measurements may, however, amplify noise in the reconstruction. In an ill-conditioned problem, the pseudo-inverse strongly amplifies measurement components associated with small singular values of $\K$, and the term $\K^\dagger \y$ may consequently become unstable. 

\subsection{The CPF-DDNM method}
\label{ssec:cpf_ddnm}

Although the reverse diffusion process in Eq.~\eqref{eq:diffusion-posterior-step} generates a sequence of strongly correlated estimates of the same unknown image, standard DDNM constructs its correction using exclusively the denoised estimate produced at the current timestep. The estimate obtained at the preceding reverse iteration is therefore discarded, even though it provides a closely related prediction of the same unobservable image component. Motivated by this observation, we propose to combine the DDNM corrections obtained at two consecutive reverse iterations. As anticipated in Eq.~\eqref{eq:fusion_intro}, the resulting fused correction is defined as $\widetilde{\D}_t=\D_t+\beta_t(\D_{t+\Delta t}-\D_t)$, where $\beta_t\in\mathbb{R}$ controls the contribution of the preceding correction. By expanding the two DDNM terms, this expression becomes
\begin{equation}
\begin{split}
\widetilde{\D}_t
&= \K^\dagger\y+\left(\I-\K^\dagger\K\right)\Bigl(\xot+\beta_t\left(\hat{\x}_{0\mid t+\Delta t}-\xot\right)\Bigr) \\
&= \K^\dagger\y+\mathcal{P}_{\operatorname{Ker}}\Bigl(\xot+\beta_t\left(\hat{\x}_{0\mid t+\Delta t}-\xot\right)\Bigr),
\end{split}
\label{eq:fusion_ddnm}
\end{equation}
where $\mathcal{P}_{\operatorname{Ker}}:=\I-\K^\dagger\K$. The proposed reverse step is then obtained by replacing the standard DDNM correction $\D_t$ with its fused counterpart $\widetilde{\D}_t$ in Eq.~\eqref{eq:diffusion-posterior-step}, yielding
\begin{equation}
\x_{t-\Delta t}=\alpha_{t-\Delta t}\widetilde{\D}_t+\sigma_{t-\Delta t}\epsilon_\theta(\xt,t).
\label{eq:reverse_fusion}
\end{equation}
This extension of DDNM, which we refer to as \emph{Consecutive Posterior Fusion DDNM} (CPF-DDNM), requires only the storage of the correction obtained at the preceding reverse iteration. It introduces no additional denoiser evaluations and requires no modification or retraining of the pretrained diffusion model.

Equation~\eqref{eq:fusion_ddnm} also shows that CPF-DDNM retains the range/null-space structure of the original DDNM correction. To make this property explicit, let
\begin{equation}
\z_t:=\mathcal{P}_{\operatorname{Ker}}\xot
\label{eq:null_component}
\end{equation}
denote the null-space completion predicted at reverse timestep $t$. The standard DDNM correction can then be written as $\D_t=\K^\dagger\y+\z_t$. Similarly, the correction obtained at the preceding reverse iteration is $\D_{t+\Delta t}=\K^\dagger\y+\z_{t+\Delta t}$, and the CPF-DDNM correction admits the decomposition
\begin{equation}
\widetilde{\D}_t=\K^\dagger\y+\widetilde{\z}_t,
\qquad
\widetilde{\z}_t:=\z_t+\beta_t\left(\z_{t+\Delta t}-\z_t\right).
\label{eq:fused_null_component}
\end{equation}
Hence, the measurement-determined component $\K^\dagger\y$ remains unchanged, while the fusion acts exclusively on the prior-driven null-space completion. The following proposition formalizes this property and establishes the corresponding data-consistency result.

\begin{proposition}[Null-space preservation and data consistency]
\label{prop:null_preservation}
Assume that $\K^\dagger$ is the Moore-Penrose pseudo-inverse of $\K$, so that $\mathcal{P}_{\operatorname{Ker}}=\I-\K^\dagger\K$ is the orthogonal projector onto $\operatorname{Ker}(\K)$. Then, for any $\beta_t\in\mathbb{R}$, the CPF-DDNM correction satisfies
\begin{equation}
\widetilde{\D}_t=\K^\dagger\y+\widetilde{\z}_t,
\qquad
\widetilde{\z}_t\in\operatorname{Ker}(\K),
\end{equation}
and consequently
\begin{equation}
\K\widetilde{\D}_t=\K\K^\dagger\y.
\end{equation}
If $\y\in\operatorname{Range}(\K)$, then $\K\widetilde{\D}_t=\y$, and the fused correction is exactly data-consistent.
\end{proposition}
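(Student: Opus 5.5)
The plan is a direct algebraic verification that rests on two Moore--Penrose identities, $\K\K^\dagger\K=\K$ and $(\K^\dagger\K)^2=\K^\dagger\K$. Only the first is strictly needed for the data-consistency statement.

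\textbf{Step 1: the decomposition.} I start from the expansion already obtained in Eq.~\eqref{eq:fusion_ddnm}. Substituting the DDNM form $\D_s=\K^\dagger\y+\mathcal{P}_{\operatorname{Ker}}\hat{\x}_{0\mid s}$ for $s=t$ and $s=t+\Delta t$ into $\widetilde{\D}_t=\D_t+\beta_t(\D_{t+\Delta t}-\D_t)$, the two copies of $\K^\dagger\y$ cancel inside the difference. This holds because $\y$ is fixed along the reverse trajectory, so $\K^\dagger\y$ is independent of $t$. Linearity of $\mathcal{P}_{\operatorname{Ker}}$ then gives
\[
\widetilde{\D}_t=\K^\dagger\y+\mathcal{P}_{\operatorname{Ker}}\bigl(\xot+\beta_t(\hat{\x}_{0\mid t+\Delta t}-\xot)\bigr),
\]
which is exactly $\K^\dagger\y+\widetilde{\z}_t$ with $\widetilde{\z}_t$ as in Eq.~\eqref{eq:fused_null_component}. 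Nothing here uses the value of $\beta_t$, so the identity holds for every $\beta_t\in\mathbb{R}$, including the extrapolative regime $\beta_t>1$.

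\textbf{Step 2: $\widetilde{\z}_t\in\operatorname{Ker}(\K)$.} It suffices to show $\K\mathcal{P}_{\operatorname{Ker}}=\boldsymbol{0}$, since $\widetilde{\z}_t$ is $\mathcal{P}_{\operatorname{Ker}}$ applied to some vector. This follows from $\K(\I-\K^\dagger\K)=\K-\K\K^\dagger\K=\boldsymbol{0}$, using the first Penrose condition. An alternative route is to note that $\widetilde{\z}_t$ is a linear combination of $\z_t,\z_{t+\Delta t}\in\operatorname{Ker}(\K)$ and that the kernel is a linear subspace. In particular, no convexity of the combination is needed.

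\textbf{Step 3: data consistency.} Applying $\K$ to the decomposition gives $\K\widetilde{\D}_t=\K\K^\dagger\y+\K\widetilde{\z}_t=\K\K^\dagger\y$. For the final claim, assume $\y\in\operatorname{Range}(\K)$ and write $\y=\K\x$ for some $\x$. Then $\K\K^\dagger\y=\K\K^\dagger\K\x=\K\x=\y$, again by the first Penrose condition. Equivalently, $\K\K^\dagger$ is the orthogonal projector onto $\operatorname{Range}(\K)$.

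There is no genuine obstacle in this argument. The only point needing care is in Step~1: both consecutive corrections must be built from the same $\y$ and the same $\K^\dagger$, so that the measurement term cancels exactly. The orthogonality of $\mathcal{P}_{\operatorname{Ker}}$, which also requires the symmetry conditions on $\K^\dagger\K$, is not needed for the kernel membership. It only justifies calling $\widetilde{\z}_t$ the orthogonal null-space component.
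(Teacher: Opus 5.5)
Your proof is correct and follows the paper's argument. Both substitute the DDNM decomposition so that the common term $\K^\dagger\y$ cancels, note that $\widetilde{\z}_t$ lies in the linear subspace $\operatorname{Ker}(\K)$, and conclude with $\K\K^\dagger\y=\y$ for $\y\in\operatorname{Range}(\K)$. Your use of the Penrose identity $\K\K^\dagger\K=\K$ to justify $\K\mathcal{P}_{\operatorname{Ker}}=\boldsymbol{0}$ and $\K\K^\dagger\y=\y$ just makes explicit steps the paper takes for granted.
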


\begin{proof}
Substituting $\D_t=\K^\dagger\y+\z_t$ and $\D_{t+\Delta t}=\K^\dagger\y+\z_{t+\Delta t}$ into Eq.~\eqref{eq:fusion_intro} gives
\begin{equation}
\widetilde{\D}_t
=\K^\dagger\y+\z_t+\beta_t\left(\z_{t+\Delta t}-\z_t\right)
=\K^\dagger\y+\widetilde{\z}_t.
\end{equation}
Since $\z_t,\z_{t+\Delta t}\in\operatorname{Ker}(\K)$ and $\operatorname{Ker}(\K)$ is a linear subspace, their affine combination $\widetilde{\z}_t=\z_t+\beta_t(\z_{t+\Delta t}-\z_t)$ also belongs to $\operatorname{Ker}(\K)$. It follows that $\K\widetilde{\z}_t=\boldsymbol{0}$ and therefore $\K\widetilde{\D}_t=\K\K^\dagger\y$. When $\y\in\operatorname{Range}(\K)$, the Moore-Penrose identity $\K\K^\dagger\y=\y$ establishes exact data consistency.
\end{proof}

Proposition~\ref{prop:null_preservation} assumes access to the exact Moore-Penrose pseudo-inverse. In the CT experiments considered in this work, the required pseudo-inverse operations are approximated through a finite number of CGLS iterations. Consequently, the corresponding null-space projection and data-consistency properties hold up to the numerical error introduced by this approximation.

\subsection{Insights on the fusion parameter}

The role of $\beta_t$ can now be interpreted geometrically within the null-space. We define the displacement between two consecutive null-space estimates as
\begin{equation}
\bd_t:=\z_{t+\Delta t}-\z_t.
\label{eq:null_fusion_direction}
\end{equation}
Using this notation, Eq.~\eqref{eq:fused_null_component} becomes
\begin{equation}
\widetilde{\z}_t=\z_t+\beta_t\bd_t.
\label{eq:null_fusion_line}
\end{equation}
Hence, the pair $(\z_t,\z_{t+\Delta t})$ determines a local direction $\bd_t$ within the null-space, while $\beta_t$ determines the position of the fused estimate along the affine line passing through these two estimates. Since the measurement-dependent component $\K^\dagger\y$ is unchanged, this displacement affects exclusively the component that is not determined by the measurements.

For $0\leq\beta_t\leq1$, the fused estimate lies on the segment connecting $\z_t$ and $\z_{t+\Delta t}$, and the fusion is therefore interpolative. In particular, $\beta_t=0$ recovers the current estimate $\z_t$, whereas $\beta_t=1$ selects the estimate $\z_{t+\Delta t}$ obtained at the preceding reverse iteration. When $\beta_t>1$, the fused estimate lies beyond $\z_{t+\Delta t}$ along the direction $\bd_t$, and the update becomes extrapolative. In this regime, Eq.~\eqref{eq:null_fusion_line} can equivalently be written as
\begin{equation}
\widetilde{\z}_t=\z_{t+\Delta t}+\m_t,
\label{eq:overrelaxed_fusion}
\end{equation}
where
\begin{equation}
\m_t:=(\beta_t-1)\bd_t=(\beta_t-1)(\z_{t+\Delta t}-\z_t).
\label{eq:momentum_term}
\end{equation}
The quantity $\m_t$ represents the additional displacement introduced beyond $\z_{t+\Delta t}$ and can therefore be interpreted as an over-relaxation term acting exclusively within $\operatorname{Ker}(\K)$. It also has a momentum-like algebraic form, since it extends the direction identified by the difference between two consecutive posterior estimates. The geometric interpretation of this extrapolative regime is illustrated in Figure~\ref{fig:graphical_abstract}.

\begin{figure}[t]
    \centering
    \includegraphics[width=0.8\linewidth]{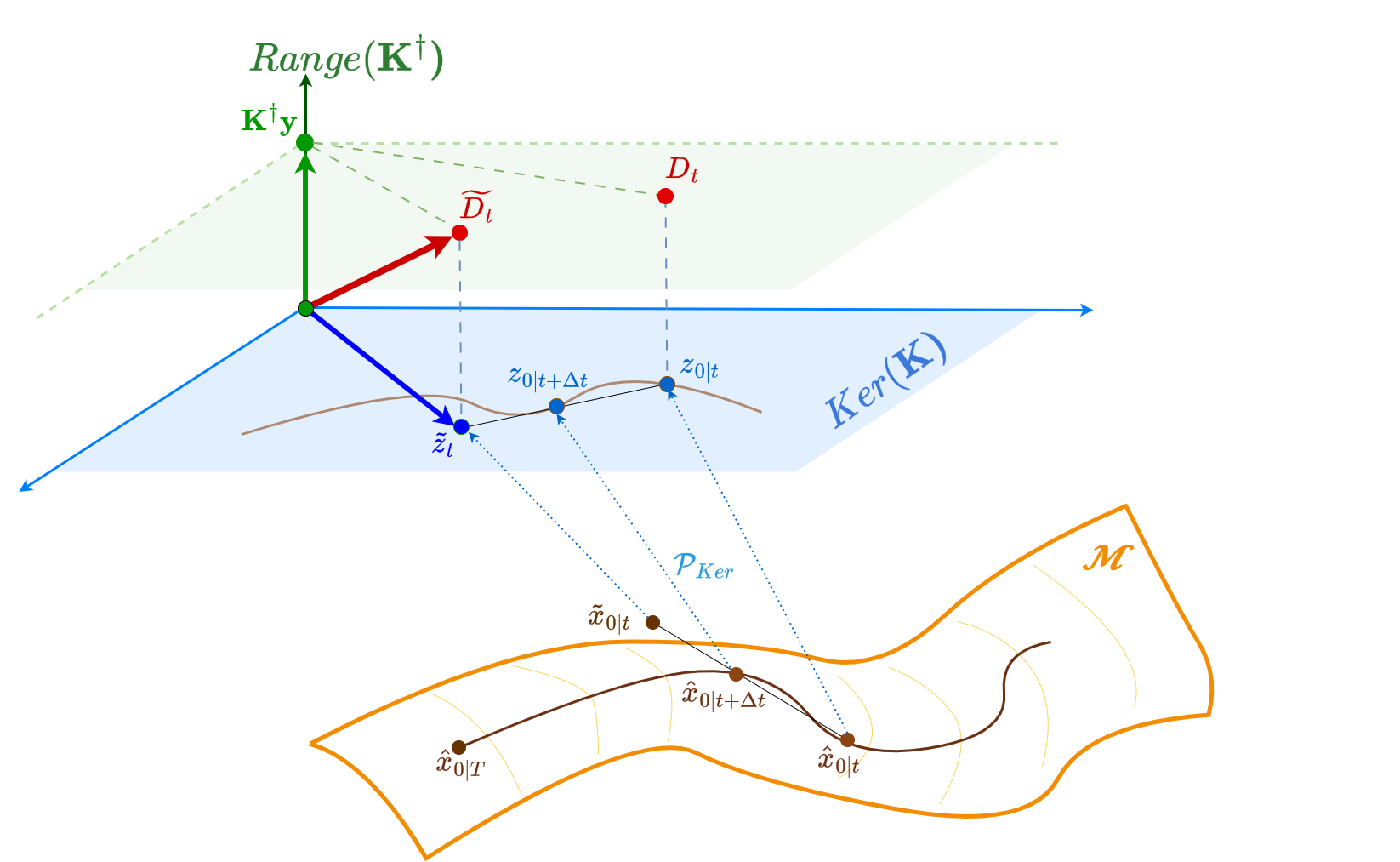}
    \caption{Geometric interpretation of the CPF-DDNM fusion in the extrapolative regime $\beta_t>1$.  The current and preceding null-space estimates $\mathcal{P}_{\operatorname{Ker}}\xot$ and $\mathcal{P}_{\operatorname{Ker}}\hat{\x}_{0 | t + \Delta t}$ determine the direction $\bd_t:=\mathcal{P}_{\operatorname{Ker}}\hat{\x}_{0 | t + \Delta t} - \mathcal{P}_{\operatorname{Ker}}\xot$. CPF-DDNM extends the estimate beyond $\mathcal{P}_{\operatorname{Ker}}\hat{\x}_{0 | t + \Delta t}$ through the additional contribution $\m_t:=(\beta_t-1)\bd_t$, while the measurement-determined component $\K^\dagger\y$ remains unchanged.}
    \label{fig:graphical_abstract}
\end{figure}

The geometric representation above describes how the fused estimate moves as $\beta_t$ varies, while the potential advantage of this operation can be analyzed through a local uncertainty model. Let $\z^*$ denote the ideal null-space component associated with the unknown image, and represent two consecutive estimates as
\begin{equation}
\z_t=\z^*+\boldsymbol{\eta}_t, \qquad \z_{t+\Delta t}=\z^*+\boldsymbol{\eta}_{t+\Delta t},
\label{eq:null_uncertainty_model}
\end{equation}
where $\boldsymbol{\eta}_t$ and $\boldsymbol{\eta}_{t+\Delta t}$ denote the errors associated with the corresponding prior-driven estimates. In the CT experiments, these terms may additionally include perturbations introduced by the numerical approximation of the pseudo-inverse. Substituting Eq.~\eqref{eq:null_uncertainty_model} into Eq.~\eqref{eq:fused_null_component} gives
\begin{equation}
\widetilde{\z}_t-\z^*=(1-\beta_t)\boldsymbol{\eta}_t+\beta_t\boldsymbol{\eta}_{t+\Delta t}.
\label{eq:fused_null_error}
\end{equation}
CPF-DDNM can therefore be interpreted as combining two correlated estimates of the same unknown null-space component. In particular, the error of the fused estimate is itself a linear combination of the errors associated with the two consecutive posterior estimates, suggesting that an appropriate choice of $\beta_t$ can reduce the expected reconstruction error.

For a fixed reverse timestep $t$, we formalize this observation by defining
\begin{equation}
J_t(\beta):=\mathbb{E}\left[\left\|(1-\beta)\boldsymbol{\eta}_t+\beta\boldsymbol{\eta}_{t+\Delta t}\right\|_2^2\right],
\label{eq:fusion_expected_error}
\end{equation}
where the expectation is taken over the randomness inducing the reconstruction errors at the considered timestep. Let
\begin{equation}
\begin{gathered}
A_t := \mathbb{E}\left[\|\boldsymbol{\eta}_t\|_2^2\right], \qquad
B_t := \mathbb{E}\left[\|\boldsymbol{\eta}_{t+\Delta t}\|_2^2\right],\\
C_t := \mathbb{E}\left[\left\langle \boldsymbol{\eta}_t, \boldsymbol{\eta}_{t+\Delta t} \right\rangle \right].
\end{gathered}
\label{eq:error_statistics}
\end{equation}
The following result characterizes the fusion coefficient minimizing the expected error under this local model.

\begin{proposition}[Optimal fusion coefficient]
\label{prop:optimal_beta}
Assume that the consecutive null-space estimates satisfy Eq.~\eqref{eq:null_uncertainty_model}. If $A_t+B_t-2C_t>0$, then $J_t(\beta)$ is strictly convex and admits the unique minimizer
\begin{equation}
\beta_t^*=\frac{A_t-C_t}{A_t+B_t-2C_t}.
\label{eq:optimal_beta_statistical}
\end{equation}
\end{proposition}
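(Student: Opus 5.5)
The plan is to expand $J_t(\beta)$ as an explicit quadratic polynomial in $\beta$ and then read off its convexity and its minimizer.

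First, write the fused error as $\boldsymbol{\eta}_t+\beta(\boldsymbol{\eta}_{t+\Delta t}-\boldsymbol{\eta}_t)$, which is Eq.~\eqref{eq:fused_null_error} rearranged. Expanding the squared Euclidean norm gives
\begin{equation*}
\|\boldsymbol{\eta}_t\|_2^2+2\beta\langle\boldsymbol{\eta}_t,\boldsymbol{\eta}_{t+\Delta t}-\boldsymbol{\eta}_t\rangle+\beta^2\|\boldsymbol{\eta}_{t+\Delta t}-\boldsymbol{\eta}_t\|_2^2 .
\end{equation*}
Since $\beta$ is deterministic, linearity of expectation together with the definitions in Eq.~\eqref{eq:error_statistics} yields
\begin{equation*}
J_t(\beta)=A_t-2\beta(A_t-C_t)+\beta^2(A_t+B_t-2C_t).
\end{equation*}
This step implicitly requires $A_t$ and $B_t$ to be finite. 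Finite second moments also make $C_t$ finite, by Cauchy--Schwarz. I would state this integrability as the standing assumption under which $J_t$ is defined at all.

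Next, observe that $A_t+B_t-2C_t=\mathbb{E}\|\boldsymbol{\eta}_{t+\Delta t}-\boldsymbol{\eta}_t\|_2^2$. This quantity is always nonnegative, and the hypothesis makes it strictly positive. Hence $J_t''(\beta)=2(A_t+B_t-2C_t)>0$, so $J_t$ is a strictly convex quadratic on $\mathbb{R}$.

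Strict convexity means any stationary point is the unique global minimizer. Setting $J_t'(\beta)=-2(A_t-C_t)+2\beta(A_t+B_t-2C_t)=0$ gives $\beta_t^*=(A_t-C_t)/(A_t+B_t-2C_t)$. Completing the square, $J_t(\beta)=J_t(\beta_t^*)+(A_t+B_t-2C_t)(\beta-\beta_t^*)^2$, which confirms uniqueness directly.

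There is no real obstacle here. The only care needed is the sign bookkeeping in the cross term: the coefficient of $\beta$ is $-2(A_t-C_t)$, not $-2(A_t+C_t)$.

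As a remark, I would note that $\beta_t^*>1$ exactly when $C_t>B_t$. This identifies when the extrapolative regime is optimal, namely when the previous error correlates with the current one more strongly than its own second moment.
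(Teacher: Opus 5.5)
Your proof is correct and follows essentially the same route as the paper: expand $J_t(\beta)$ into the quadratic $A_t-2\beta(A_t-C_t)+\beta^2(A_t+B_t-2C_t)$, set the first derivative to zero, and use the positive second derivative to get strict convexity and uniqueness. Your additional observations are refinements that the paper leaves implicit: finiteness of the second moments, the identity $A_t+B_t-2C_t=\mathbb{E}\|\boldsymbol{\eta}_{t+\Delta t}-\boldsymbol{\eta}_t\|_2^2\geq 0$, and the completed-square form.
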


\begin{proof}
Expanding Eq.~\eqref{eq:fusion_expected_error} gives
\begin{equation}
J_t(\beta)=(1-\beta)^2A_t+\beta^2B_t+2\beta(1-\beta)C_t.
\label{eq:fusion_error_quadratic}
\end{equation}
Differentiating with respect to $\beta$ yields
\begin{equation}
\frac{\partial J_t}{\partial\beta}=2(C_t-A_t)+2\beta(A_t+B_t-2C_t).
\end{equation}
Setting the derivative equal to zero gives Eq.~\eqref{eq:optimal_beta_statistical}. Moreover,
\begin{equation}
\frac{\partial^2J_t}{\partial\beta^2}=2(A_t+B_t-2C_t)>0,
\end{equation}
which proves that $\beta_t^*$ is the unique minimizer.
\end{proof}

Proposition~\ref{prop:optimal_beta} shows that the optimal fusion coefficient depends on both the error magnitude of the consecutive estimates and their correlation. Its value is therefore not restricted to the interpolation interval $[0,1]$. In particular, under the condition of Proposition~\ref{prop:optimal_beta},
\begin{equation}
\beta_t^*>1 \quad\Longleftrightarrow\quad C_t>B_t,
\label{eq:optimal_extrapolation_condition}
\end{equation}
showing that an extrapolative update may minimize the expected error when the cross-correlation between the current and preceding errors is sufficiently large relative to the error energy of the preceding estimate. This result provides a statistical counterpart to the geometric interpretation in Figure~\ref{fig:graphical_abstract}, while avoiding the need to introduce a separate realization-wise optimization criterion.

The statistics $A_t$, $B_t$, and $C_t$ depend on the unknown reconstruction errors and are therefore unavailable during inference. Rather than attempting to evaluate Eq.~\eqref{eq:optimal_beta_statistical} explicitly, we parametrize $\beta_t$ through a task-dependent schedule of the form
\begin{equation}
\beta_t=c_1+c_2\sqrt{1-\frac{t}{T}},
\label{eq:beta_schedule}
\end{equation}
where $c_1$ and $c_2$ control the evolution of the fusion strength along the reverse trajectory. In particular,
\begin{equation}
\beta_T=c_1, \qquad \beta_0=c_1+c_2.
\end{equation}
Different choices of $c_1$ and $c_2$ therefore allow the fusion strength to increase or decrease during sampling, to vanish as the reverse process approaches the clean-image regime, or to remain active throughout the trajectory, including in the extrapolative regime. The schedules considered in this work and their effect on reconstruction performance are investigated experimentally in Section~\ref{ssec:ablation_beta}.

\section{Experimental setup \label{sec:expsetup}}

In this section, we describe the implementation details of the proposed method and the medical imaging applications whose results are presented and discussed in the following section. All experiments were implemented in Python 3.12.12 using PyTorch v2.7.1, MONAI v1.5.1, and ASTRA Toolbox v2.4.1. Training and inference were performed on an NVIDIA A4000 GPU with 24 GB of memory. Unless otherwise specified, all experiments used the random seed 42. The source code, configuration files, pretrained checkpoint, and patient-level data partitions
will be made available at  \url{code-released-upon-acceptance}. 

\subsection{Dataset and preprocessing}

The dataset used is the Mayo Clinic dataset~\cite{mccollough2016tu}, consisting of 3631 images with $512 \times 512$ pixels of human chests from 11 patients, which we have downscaled to $256 \times 256$ pixels and normalized in the range $[0, 1]$. In particular, we selected 3305 images from 10 patients for training and the remaining 326 from a single patient for testing. Figure \ref{fig:gtMayo}  shows representative  CT images from the test set, selected to illustrate different anatomical regions and structural characteristics. The corresponding enlarged regions displayed in the second row highlight fine anatomical details, including soft-tissue boundaries, pulmonary structures, and bone interfaces. Images belonging to the held-out test set are used as ground-truth references.

\begin{figure}[tbhp]
    \centering
    \includegraphics[width=0.18\linewidth]{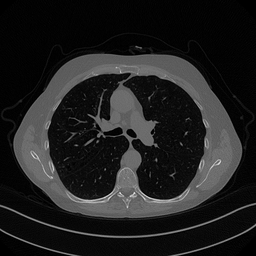}
    \includegraphics[width=0.18\linewidth]{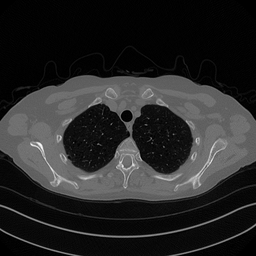}
    \includegraphics[width=0.18\linewidth]{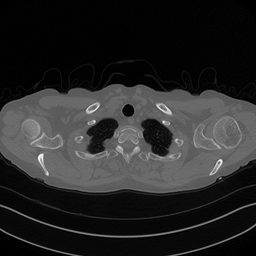}
    \includegraphics[width=0.18\linewidth]{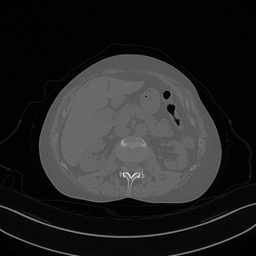}
    \includegraphics[width=0.18\linewidth]{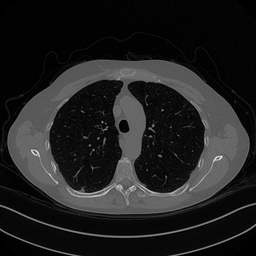} \\
    
    \includegraphics[trim= 28mm 38mm 32mm 37mm, clip, width=0.18\linewidth]{img/img00_gt.png}
    \includegraphics[trim= 35mm 45mm 25mm 30mm, clip, width=0.18\linewidth]{img/img01_gt.png}
    \includegraphics[trim= 20mm 25mm 40mm 50mm, clip,width=0.18\linewidth]{img/img07_gt.png}
    \includegraphics[trim= 40mm 45mm 20mm 30mm, clip,width=0.18\linewidth]{img/img03_gt.png}
    \includegraphics[trim= 30mm 15mm 30mm 60mm, clip,width=0.18\linewidth]{img/img06_gt.png}
    \caption{Representative axial CT images from the Mayo Clinic dataset used as
ground-truth references. The second row shows zoomed regions of interest.}
    \label{fig:gtMayo}
\end{figure}

\subsection{Diffusion model and training}
\label{subsec:diffusion_training}

As a diffusion prior for all the experiments, we consider the one released in \cite{evangelista2026physicsguidedflowmatchingct}, which was trained from scratch on the Mayo Clinic images. In particular, we employed a two-dimensional \texttt{DiffusionModelUNet} architecture provided by the MONAI Generative Models framework \cite{cardoso2022monai}. The network gets as input a batch of grey-scale $256 \times 256$ images and processes it via a four-level UNet architecture with $64$, $128$, $256$, and $512$ feature channels, respectively. Each resolution level contains two residual blocks. As is common practice, we employed self-attention only at the two deepest resolution levels, where the spatial dimensionality is smaller and long-range dependencies can be modeled at a reduced computational cost. The attention layers use $32$ channels per attention head. The network receives the noisy image $\x_t$ together with the corresponding diffusion timestep $t$, embedded through a sinusoidal embedding layer, and predicts a single-channel estimate of the noise $\eps_\Theta(\x_t,t)$ appearing in Eq.~\eqref{eq:loss_function}. 

Training was performed using a discrete DDPM forward process with $T=1000$ diffusion timesteps. We adopted a linear variance schedule, with $\beta_1=10^{-4}$ and $\beta_T=2\times10^{-2}$. At every training iteration, a clean image $\x_0$ and a timestep $t$ were sampled, with $t$ drawn uniformly from the $1000$ available diffusion steps. Before applying the forward diffusion process, the image intensities, originally normalized to $[0,1]$, were linearly mapped to $[-1,1]$. For any timestep $t$, a Gaussian noise realization $\eps_t \sim \mathcal{N}(\boldsymbol{0},\I)$ was then generated and added to the clean image according to the noise level associated with the sampled timestep. The resulting noisy image was provided to the U-Net, which was trained to recover the injected noise by minimizing the mean squared error
\begin{equation}
    \mathcal{L}_{DDPM}(\Theta) = \mathbb{E}_{\x_0,t,\eps_t} \left[ || \eps_\Theta(\x_t,t)-\eps_t ||_2^2 \right],
\end{equation}
consistently with the noise-prediction objective introduced in Eq.~\eqref{eq:loss_function}. The model parameters were optimized using Adam with a learning rate of $10^{-4}$ and a batch size of $16$, for a total of $100$ epochs. 


\subsection{Medical applications and imaging tasks}
\label{subsec:medical_tasks}

We evaluate CPF-DDNM on two medical imaging inverse problems: computed tomography reconstruction and single-image super-resolution. For CT, we consider both sparse-view and low-dose acquisition settings. Although these problems differ in the way the measurements are degraded, they all contain image components that are either unobservable or only weakly constrained by the acquired data, and therefore require prior information to obtain a plausible reconstruction.

\subsubsection{CT image reconstruction}
\label{subsec:ct_reconstruction}

Computed tomography reconstructs a cross-sectional image from projection measurements acquired at different viewing angles. In the two-dimensional parallel-beam geometry considered in this work, each projection angle defines a family of parallel X-ray paths through the object, while the detector records the corresponding line-integral measurements. The collection of all projections acquired over the angular range forms the sinogram.

Let $\x\in\mathbb{R}^{M^2}$ denote a ground-truth image and let $\K$ be the discretized parallel-beam projection operator. For each image, the simulated measurements are generated according to
\begin{equation}
\y=\K\x+\delta\left\|\K\x\right\|_{\infty}\e, \qquad \e\sim\mathcal{N}(\boldsymbol{0},\I),
\label{eq:ct_noise_model}
\end{equation}
where $\delta$ controls the relative noise level. Both $\K$ and $\K^\top$ were implemented using the \texttt{AstraToolbox}~\cite{van2015astra}. For an $M\times M$ image acquired using $N_{\theta}$ projection angles and a detector with $N_d$ elements, the corresponding discretized operator satisfies $\K\in\mathbb{R}^{N_{\theta} \cdot N_d\times M^2}$. In all our CT experiments, we used $M=256$ and $N_d=512$.

We considered two acquisition settings, corresponding to sparse-view and low-dose CT. 
In the sparse-view experiments, the number of acquired measurements is smaller than the number of unknown image pixels. In particular, for all the geometries considered in this work, $N_{\theta}N_d<M^2$, which implies that the system is underdetermined and that $\K$ necessarily has a non-trivial null-space. As the number of projection angles decreases, the amount of available information is further reduced, and the unresolved image subspace generally becomes larger. We test three acquisition geometries, denoted by $G_{60}$, $G_{45}$, and $G_{30}$, corresponding respectively to $N_{\theta}\in\{60,45,30\}$ projection angles uniformly distributed over $[0,180^\circ)$. A relative noise level of $\delta=0.01$ was used in all sparse-view experiments.

For the low-dose experiments, the parallel-beam geometry was instead fixed to $N_{\theta}=180$ projection angles uniformly distributed over $[0,180^\circ)$, so that the number and spatial arrangement of the measurements remained unchanged across the experiments. In actual low-dose CT acquisitions, the reduced photon count increases measurement uncertainty and decreases the signal-to-noise ratio of the projection data. We approximate this effect using the additive post-log Gaussian noise model in Eq.~\eqref{eq:ct_noise_model}, considering relative noise levels $\delta\in\{0.1,0.3,0.5\}$. This model should be interpreted as a simplified approximation of the quantum and electronic noise affecting CT measurements after logarithmic preprocessing, rather than as a complete physical simulation of the photon-counting acquisition process. Although the system matrix in this setting has more rows than columns, the nominal number of measurements $N_{\theta}N_d$ overestimates the amount of useful information available to the reconstruction, since detector bins corresponding to rays that do not intersect the image support produce zero measurements. For a square image with side length $M$, the projected width of the support at angle $\theta$ is approximately $M(|\cos\theta|+|\sin\theta|)$. Consequently, for uniformly distributed projection angles, the effective number of nonzero measurements can be approximated as
\begin{equation}
m_{\mathrm{eff}}\approx M\sum_{k=0}^{N_{\theta}-1}\left(|\cos\theta_k|+|\sin\theta_k|\right)\approx\frac{4}{\pi}MN_{\theta}.
\label{eq:effective_measurements}
\end{equation}
A complete proof of this result can be found in \cite{jorgensen2012quantifying}.
Since $\operatorname{rank}(\K)\leq m_{\mathrm{eff}}$, the condition $m_{\mathrm{eff}}<M^2$ is sufficient to imply $\operatorname{rank}(\K)<M^2$ and therefore $\operatorname{Ker}(\K)\neq\{\boldsymbol{0}\}$. Using the approximation in Eq.~\eqref{eq:effective_measurements}, this condition reduces to $N_{\theta}<\frac{\pi}{4}M$.

In our experiments, $M=256$ and $N_{\theta}=180$, while $\frac{\pi}{4}M\approx201.1$. According to this support-based dimensional estimate, the discretized projection operator considered in the low-dose experiments is therefore expected to have a non-trivial kernel. Independently of this exact null-space component, the added measurement noise also introduces substantial uncertainty along poorly conditioned and weakly observable directions of the CT operator.

\subsubsection{Medical image super-resolution}
\label{subsec:super_resolution}

Single-image super-resolution aims to reconstruct a high-resolution image from a spatially downsampled observation. In the setting considered in this work, each low-resolution pixel is obtained by averaging the intensities over a non-overlapping block of high-resolution pixels. This operation removes fine-scale anatomical information, including small structures and sharp tissue boundaries, which cannot be uniquely recovered from the observed image alone.

The acquisition model is written as $\y=\K\x+\e$, where $\x\in\mathbb{R}^{M^2}$ is the unknown high-resolution image, $\K$ denotes the block-averaging downsampling operator, and $\e$ represents additive measurement noise defined as in Eq.~\eqref{eq:ct_noise_model}. We considered both the noiseless case and a mildly noisy setting, with $\delta\in\{0,\ 0.01\}$.

For an integer scale factor $s>1$, the observed image contains $(M/s)^2$ pixels and the corresponding forward operator satisfies $\K\in\mathbb{R}^{(M/s)^2\times M^2}$. Since $(M/s)^2<M^2$, the resulting inverse problem is necessarily underdetermined and $\K$ has a non-trivial null-space. More precisely,
\begin{equation}
\dim\operatorname{Ker}(\K)=M^2-\operatorname{rank}(\K)\geq M^2-\left(\frac{M}{s}\right)^2.
\end{equation}
Multiple high-resolution images can therefore produce the same low-resolution observation, and the fine-scale image components removed by the downsampling operation must be inferred using prior information. This null-space ambiguity makes super-resolution a natural application for DDNM-based reconstruction and for the proposed CPF-DDNM strategy. In all super-resolution experiments, we used a scale factor $s=4$.
\subsection{Pseudo-inverse computation and algorithmic implementation}

The proposed method requires the application of the pseudo-inverse $\K^\dagger$ as in \eqref{eq:fusion_ddnm}. The way $\K^\dagger$ is evaluated depends on the structure of the forward operator, and this distinction is important for the  inverse problems considered in this work.\\
For super-resolution, the degradation operator is an average downsampling operator acting blockwise. In this case, the Singular Value Decomposition (SVD) of $\K$ is available in closed form, since each low-resolution pixel corresponds to the average of a non-overlapping high-resolution block. Therefore, the pseudo-inverse can be computed exactly by applying the corresponding blockwise SVD. This allows us to implement the DDNM correction with an exact $\K^\dagger$ in the super-resolution experiments.

For computed tomography, the forward operator $\K$ maps an image into its corresponding sinogram. At the image sizes considered in this work, the matrix representation of $\K$ is too large to be explicitly formed and factorized, and therefore the action of the Moore-Penrose pseudoinverse cannot be computed through an explicit SVD.
%
Therefore, rather than exactly constructing $\K^\dagger$, we approximate its action numerically using the conjugate-gradient least-squares (CGLS) algorithm initialized from the zero image. CGLS requires only applications of the forward operator $\K$ and its adjoint $\K^\top$, and is therefore well suited to large-scale tomographic reconstruction problems. For a fixed number $J$ of iterations, we denote the resulting approximate inverse action by
$$
\operatorname{CGLS}_J(\K,\boldsymbol{w}) \simeq \widetilde \K^\dagger_J \boldsymbol{w},
$$
where $\boldsymbol{w}$ is a generic measurement-space vector. 
%
Since to compute $\widetilde{\D}_t$ we only need $\K^\dagger \y$ and $\left(\I-\K^\dagger \K\right)\xot$, the DDNM correction can be implemented without explicitly constructing the projector $\K^\dagger$. In the CT experiments, we replace the exact pseudo-inverse action with its finite-iteration CGLS approximation and use
\begin{align}
    \D_t = \texttt{CGLS}_J(\K,\y) + \texttt{CGLS}_J(\K,\K\xot), \quad J=16.
\end{align}
%
Hence, the corrected estimates may be affected by two distinct sources of approximation: the error in the prior-driven prediction produced by the diffusion model and the numerical error associated with the finite-iteration approximation of $\K^\dagger$. Unlike the exact-pseudoinverse setting considered in the theoretical analysis, data consistency and the range/null-space decomposition are therefore satisfied only up to the accuracy of the CGLS approximation. We observe that finite-iteration CGLS introduces an implicit regularization effect, which can be relevant in the presence of measurement noise. 

The complete reconstruction procedure is summarized in Algorithm~\ref{alg:proposed_method}.

\begin{algorithm}[t]
\caption{CPF-DDNM posterior sampling}
\label{alg:proposed_method}
\begin{algorithmic}[1]
\REQUIRE Measurement $\by$, forward operator $\K$, pretrained denoiser $\bm{\epsilon}_\theta$, fusion schedule $\beta(t)$, number of CGLS iterations $J \in \mathbb{N}$
\STATE Initialize $\bx_T \sim \mathcal{N}(\boldsymbol{0},\I)$
\STATE Compute $\K^\dagger\by = \texttt{CGLS}_J(\K,\by)$
\STATE Set $\D_{\mathrm{prev}} = \texttt{None}$
\FOR{$t = T, T-\Delta t, \ldots, \Delta t$}
    \STATE Compute $\hat{\bx}_{0\vert t} = \left(\bx_t-\sigma_t\bm{\epsilon}_\theta(\bx_t,t)\right)/\alpha_t$
    \STATE Compute $\K^\dagger{\K}\hat{\bx}_{0\vert t} = \texttt{CGLS}_J\!\left(\K,\K\hat{\bx}_{0\vert t}\right)$
    \STATE Compute $\D_t = \K^\dagger\by + \hat{\bx}_{0\vert t} - \K^\dagger\K\hat{\bx}_{0\vert t}$ as in \eqref{eq:DDNM}
    \IF{$\D_{\mathrm{prev}} \neq \texttt{None}$}
        \STATE Set $\beta_t = \beta(t)$
        \STATE Compute $\widetilde{\D}_t = \D_t + \beta_t\left(\D_{\mathrm{prev}}-\D_t\right)$ as in \eqref{eq:fusion_intro}
    \ELSE
        \STATE Set $\widetilde{\D}_t = \D_t$
    \ENDIF
    \STATE Update $\bx_{t-\Delta t} = \alpha_{t-\Delta t}\widetilde{\D}_t + \sigma_{t-\Delta t}\bm{\epsilon}_\theta(\bx_t,t)$
    \STATE Set $\D_{\mathrm{prev}} = \D_t$
\ENDFOR
\RETURN $\bx_0$
\end{algorithmic}
\end{algorithm}

\subsection{Comparison methods, parameter settings, and evaluation metrics}

The proposed CPF-DDNM method was compared with DDNM \cite{wangzero}, DiffPIR \cite{zhu2023denoising}, Diffusion Posterior Sampling (DPS) \cite{chung2023diffusion}, and its Posterior Sampling Plus variant (PS+) \cite{chung2023diffusion}. DDNM represents the main baseline, since CPF-DDNM directly modifies its null-space completion strategy. For the sparse-view and low-dose CT experiments, we additionally considered the analytical filtered backprojection (FBP) method \cite{kak2001principles} and two model-based iterative reconstruction methods: the Simultaneous Iterative Reconstruction Technique (SIRT) \cite{gilbert1972iterative} and an $\ell_2$-TV regularized least-squares method solved using the Chambolle-Pock algorithm \cite{chambolle2011first,loli2021model}.

The parameters of the comparison methods were tuned on the training set and then kept fixed across all test images within the same task and acquisition setting. For DDNM and CPF-DDNM, we set $\Delta t=50$ in all experiments which, together with $T=1000$, results in $N=20$ reverse diffusion steps. In the CT experiments, the action of the pseudo-inverse $\K^\dagger$ was approximated using CGLS with $J = 16$ iterations, a setting that provided stable performance across the considered acquisition conditions. Note that limiting the number of CGLS iterations introduces an implicit regularization effect, since early termination reduces the amplification of measurement noise along poorly determined components of the inverse problem. This truncated CGLS approximation was therefore used for both DDNM and CPF-DDNM as an alternative to the explicit noise-handling strategy introduced by DDNM+.

The choice of the fusion coefficient $\beta_t$ is a crucial aspect of CPF-DDNM. Accordingly, we conducted an ablation study on sparse-view CT, reported in Section~\ref{ssec:ablation_beta}, considering both constant coefficients and time-dependent schedules. The experimental analysis indicates that the CT reconstruction tasks benefit from maintaining $\beta_t>1$ throughout the reverse trajectory, thereby preserving the extrapolative effect of the fusion. For all CT experiments, the fusion coefficient was defined according to the time-dependent schedule
\begin{equation}
\beta_t=1.8-0.6\sqrt{1-\frac{t}{T}},
\label{eq:ct_beta_schedule_2}
\end{equation}
which keeps $\beta_t$ in the extrapolative regime throughout the reverse diffusion process, with $\beta_t\in[1.2,1.8]$. For super-resolution, the most effective schedule instead gradually decreases toward zero during the reverse process and is defined as
\begin{equation}
\beta_t=1-\sqrt{1-\frac{t}{T}}.
\label{eq:ct_beta_schedule_1}
\end{equation}

Reconstruction quality was quantitatively evaluated using three complementary metrics: SSIM \cite{wang2003multiscale}, which measures structural similarity; PSNR, which quantifies pixel-wise reconstruction fidelity; and LPIPS \cite{zhang2018unreasonable}, which measures perceptual similarity in a learned feature space. Each reconstruction was repeated using five random initializations. For each test image, the metrics were first averaged across the five runs, after which the reported mean and standard deviation were computed across the test images.

\section{Results and discussion\label{sec:results}}

In this Section we present and discuss the results obtained on the experiments described in Section \ref{sec:expsetup}.

\subsection{Results on sparse-view CT reconstruction} 
The first numerical experiment addresses sparse-view CT reconstruction. We evaluate the considered methods under increasingly severe angular undersampling by using 60, 45, and 30 projection angles. The quantitative results are summarized in Table~\ref{tab:metrics_SpCT}.

\begin{table}[htbp]
\centering
\caption{Quantitative evaluation metrics computed over the entire test set across five runs with different initial noise realizations, for acquisition geometries with 30, 45, and 60 projection angles. In bold, the best results for each experiment.}
\footnotesize
\setlength{\tabcolsep}{3pt}
\renewcommand{\arraystretch}{1.05}
\begin{tabular}{llccc}
\toprule
 & \textbf{Method} & \textbf{SSIM ($\uparrow$)} & \textbf{LPIPS ($\downarrow$)} & \textbf{PSNR ($\uparrow$)} \\
\toprule
\multirow{8}{*}{$G_{30}$}
& FBP      & $0.407 \pm 0.014$ & $0.442 \pm 0.016$ & $22.83 \pm 0.46$ \\
& SIRT     & $0.636 \pm 0.021$ & $0.384 \pm 0.014$ & $26.52 \pm 0.55$ \\
& $\ell_2$-TV & $0.828 \pm 0.009$ & $0.132 \pm 0.015$ & $30.61 \pm 0.48$ \\
& DPS      & $0.756 \pm 0.018$ & $0.136 \pm 0.024$ & $26.62 \pm 0.72$ \\
& PS+      & $0.524 \pm 0.018$ & $0.231 \pm 0.013$ & $19.96 \pm 0.52$ \\
& DiffPIR  & $0.851 \pm 0.012$ & $0.085 \pm 0.011$ & $32.12 \pm 0.62$ \\
& DDNM     & $0.784 \pm 0.018$ & $0.132 \pm 0.011$ & $31.06 \pm 0.62$ \\
& CPF-DDNM & $\bm{0.851 \pm 0.009}$ & $\bm{0.083 \pm 0.015}$ & $\bm{32.40 \pm 0.52}$ \\
\midrule
\multirow{8}{*}{$G_{45}$}
& FBP      & $0.518 \pm 0.018$ & $0.364 \pm 0.012$ & $25.84 \pm 0.49$ \\
& SIRT     & $0.705 \pm 0.016$ & $0.322 \pm 0.011$ & $28.19 \pm 0.53$ \\
& $\ell_2$-TV & $0.860 \pm 0.006$ & $0.092 \pm 0.013$ & $32.62 \pm 0.45$ \\
& DPS      & $0.759 \pm 0.017$ & $0.135 \pm 0.022$ & $26.74 \pm 0.66$ \\
& PS+      & $0.525 \pm 0.018$ & $0.231 \pm 0.013$ & $19.98 \pm 0.53$ \\
& DiffPIR  & $0.888 \pm 0.011$ & $0.060 \pm 0.009$ & $34.10 \pm 0.63$ \\
& DDNM     & $0.824 \pm 0.016$ & $0.106 \pm 0.011$ & $32.67 \pm 0.61$ \\
& CPF-DDNM & $\bm{0.913 \pm 0.006}$ & $\bm{0.036 \pm 0.003}$ & $\bm{34.86 \pm 0.49}$ \\
\midrule
\multirow{8}{*}{$G_{60}$}
& FBP      & $0.602 \pm 0.019$ & $0.317 \pm 0.012$ & $27.63 \pm 0.50$ \\
& SIRT     & $0.758 \pm 0.015$ & $0.264 \pm 0.010$ & $29.55 \pm 0.54$ \\
& $\ell_2$-TV & $0.880 \pm 0.006$ & $0.076 \pm 0.010$ & $33.72 \pm 0.45$ \\
& DPS      & $0.759 \pm 0.016$ & $0.135 \pm 0.021$ & $26.74 \pm 0.61$ \\
& PS+      & $0.525 \pm 0.018$ & $0.230 \pm 0.013$ & $19.99 \pm 0.53$ \\
& DiffPIR  & $0.912 \pm 0.009$ & $0.043 \pm 0.007$ & $35.61 \pm 0.63$ \\
& DDNM     & $0.853 \pm 0.014$ & $0.084 \pm 0.010$ & $33.89 \pm 0.62$ \\
& CPF-DDNM & $\bm{0.938 \pm 0.006}$ & $\bm{0.021 \pm 0.003}$ & $\bm{37.27 \pm 0.66}$ \\
\bottomrule
\end{tabular}
\label{tab:metrics_SpCT}
\end{table}

As expected, the performance of most reconstruction methods improves as the number of acquired views increases. CPF-DDNM achieves the best mean values for all three evaluation metrics and for all the considered acquisition geometries, yielding the highest SSIM and PSNR values and the lowest LPIPS scores. For the most challenging 30-view geometry, the CPF-DDNM performance is very close to that of DiffPIR. Nevertheless, CPF-DDNM provides a clear improvement over the original DDNM baseline, increasing the SSIM from $0.7837$ to $0.8511$ and the PSNR from $31.06$ to $32.40\,\mathrm{dB}$, while reducing the LPIPS from $0.1322$ to $0.0829$.

For the 45- and 60-view geometries, the advantage of CPF-DDNM over the competing diffusion-based methods becomes more pronounced. With 45 views, CPF-DDNM improves the PSNR by approximately $0.77\,\mathrm{dB}$ over DiffPIR and reduces the LPIPS from 0.0601 to 0.0356, while also achieving a higher SSIM (0.9125 versus 0.8877). The improvement is even more evident with 60 views, where CPF-DDNM reaches a PSNR of $37.27\,\mathrm{dB}$, approximately $1.66\,\mathrm{dB}$ higher than DiffPIR, together with an LPIPS reduction from 0.0433 to 0.0208. Compared with the original DDNM baseline, the proposed fusion consistently improves all three metrics in both acquisition settings. Overall, the increasing performance gap observed from 45 to 60 views is consistent with the hypothesis that consecutive posterior fusion can more effectively exploit the additional measurement information available at denser angular samplings.

\begin{sidewaysfigure} 
\centering
\setlength{\tabcolsep}{0.5mm}

\begin{tabular}{@{}*{8}{c}@{}}

FBP &
SIRT &
$\ell_2$-TV &
DPS &
PS+ &
DiffPIR &
DDNM &
CPF-DDNM
\\[1mm]

\includegraphics[width=0.118\textwidth]{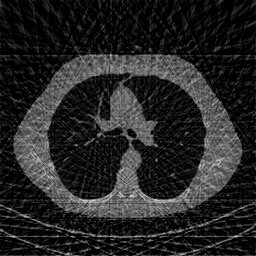} &
\includegraphics[width=0.118\textwidth]{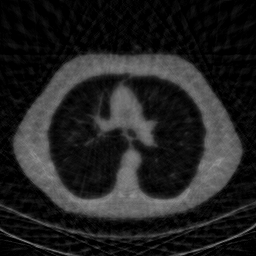} &
\includegraphics[width=0.118\textwidth]{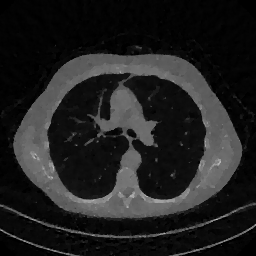} &
\includegraphics[width=0.118\textwidth]{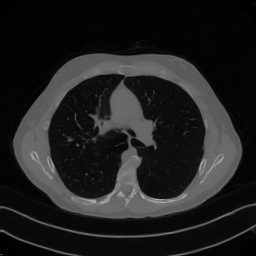} &
\includegraphics[width=0.118\textwidth]{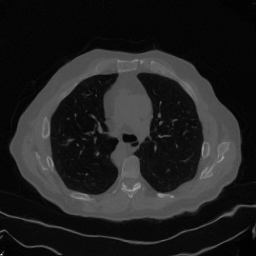} &
\includegraphics[width=0.118\textwidth]{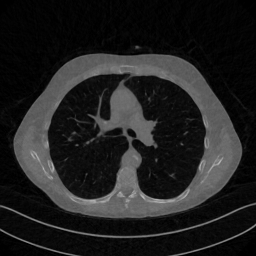} &
\includegraphics[width=0.118\textwidth]{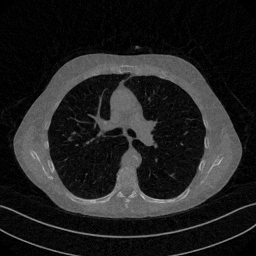} &
\includegraphics[width=0.118\textwidth]{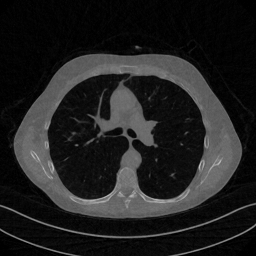}
\\ \noalign{\vskip -1mm}

\includegraphics[trim=28mm 38mm 32mm 37mm,clip,width=0.118\textwidth]{img/img00_30viste_fbp.png} &
\includegraphics[trim=28mm 38mm 32mm 37mm,clip,width=0.118\textwidth]{img/img00_30viste_sirt.png} &
\includegraphics[trim=28mm 38mm 32mm 37mm,clip,width=0.118\textwidth]{img/img00_30viste_tv.png} &
\includegraphics[trim=28mm 38mm 32mm 37mm,clip,width=0.118\textwidth]{img/img00_30viste_dps.png} &
\includegraphics[trim=28mm 38mm 32mm 37mm,clip,width=0.118\textwidth]{img/img00_30viste_psplus.png} &
\includegraphics[trim=28mm 38mm 32mm 37mm,clip,width=0.118\textwidth]{img/img00_30viste_diffpir.png} &
\includegraphics[trim=28mm 38mm 32mm 37mm,clip,width=0.118\textwidth]{img/img00_30viste_ddnm.png} &
\includegraphics[trim=28mm 38mm 32mm 37mm,clip,width=0.118\textwidth]{img/img00_30viste_cpf.png}
\\[1mm]

\includegraphics[width=0.118\textwidth]{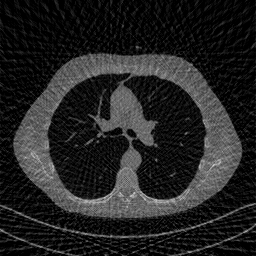} &
\includegraphics[width=0.118\textwidth]{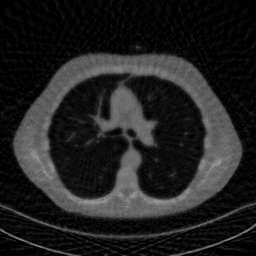} &
\includegraphics[width=0.118\textwidth]{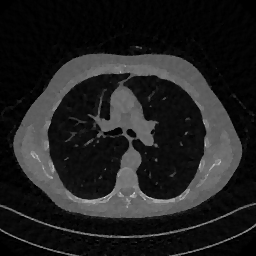} &
\includegraphics[width=0.118\textwidth]{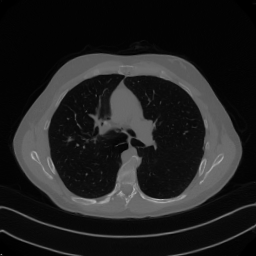} &
\includegraphics[width=0.118\textwidth]{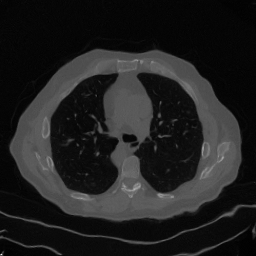} &
\includegraphics[width=0.118\textwidth]{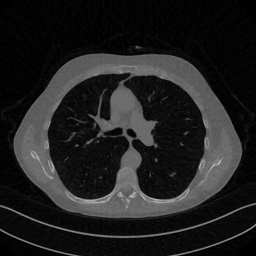} &
\includegraphics[width=0.118\textwidth]{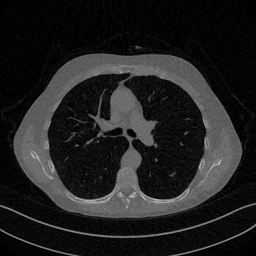} &
\includegraphics[width=0.118\textwidth]{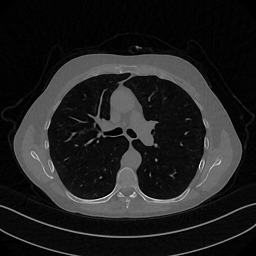}
\\ \noalign{\vskip -1mm}

\includegraphics[trim=28mm 38mm 32mm 37mm,clip,width=0.118\textwidth]{img/img00_45viste_fbp.png} &
\includegraphics[trim=28mm 38mm 32mm 37mm,clip,width=0.118\textwidth]{img/img00_45viste_sirt.png} &
\includegraphics[trim=28mm 38mm 32mm 37mm,clip,width=0.118\textwidth]{img/img00_45viste_tv.png} &
\includegraphics[trim=28mm 38mm 32mm 37mm,clip,width=0.118\textwidth]{img/img00_45viste_dps.png} &
\includegraphics[trim=28mm 38mm 32mm 37mm,clip,width=0.118\textwidth]{img/img00_45viste_psplus.png} &
\includegraphics[trim=28mm 38mm 32mm 37mm,clip,width=0.118\textwidth]{img/img00_45viste_diffpir.png} &
\includegraphics[trim=28mm 38mm 32mm 37mm,clip,width=0.118\textwidth]{img/img00_45viste_ddnm.png} &
\includegraphics[trim=28mm 38mm 32mm 37mm,clip,width=0.118\textwidth]{img/img00_45viste_cpf.png}
\\[1mm]

\includegraphics[width=0.118\textwidth]{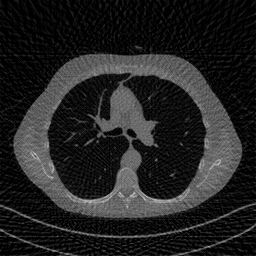} &
\includegraphics[width=0.118\textwidth]{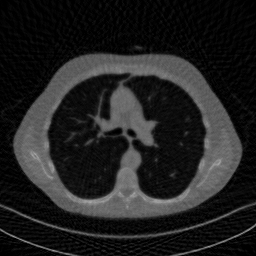} &
\includegraphics[width=0.118\textwidth]{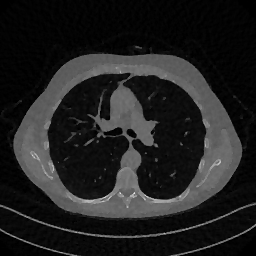} &
\includegraphics[width=0.118\textwidth]{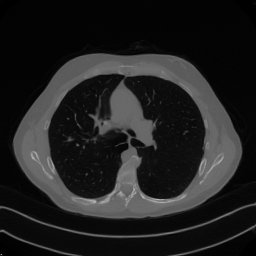} &
\includegraphics[width=0.118\textwidth]{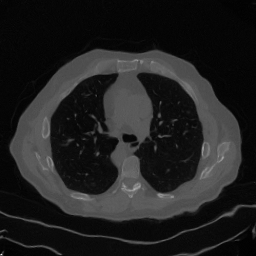} &
\includegraphics[width=0.118\textwidth]{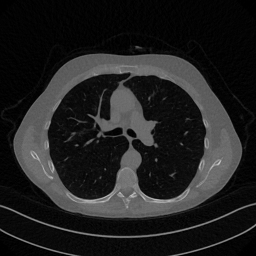} &
\includegraphics[width=0.118\textwidth]{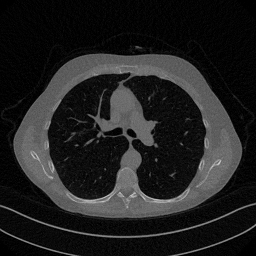} &
\includegraphics[width=0.118\textwidth]{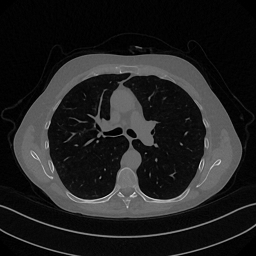}
\\ \noalign{\vskip -1mm}

\includegraphics[trim=28mm 38mm 32mm 37mm,clip,width=0.118\textwidth]{img/img00_60viste_fbp.png} &
\includegraphics[trim=28mm 38mm 32mm 37mm,clip,width=0.118\textwidth]{img/img00_60viste_sirt.png} &
\includegraphics[trim=28mm 38mm 32mm 37mm,clip,width=0.118\textwidth]{img/img00_60viste_tv.png} &
\includegraphics[trim=28mm 38mm 32mm 37mm,clip,width=0.118\textwidth]{img/img00_60viste_dps.png} &
\includegraphics[trim=28mm 38mm 32mm 37mm,clip,width=0.118\textwidth]{img/img00_60viste_psplus.png} &
\includegraphics[trim=28mm 38mm 32mm 37mm,clip,width=0.118\textwidth]{img/img00_60viste_diffpir.png} &
\includegraphics[trim=28mm 38mm 32mm 37mm,clip,width=0.118\textwidth]{img/img00_60viste_ddnm.png} &
\includegraphics[trim=28mm 38mm 32mm 37mm,clip,width=0.118\textwidth]{img/img00_60viste_cpf.png}
\end{tabular}
\caption{Qualitative comparison of sparse-view CT reconstructions obtained with the different methods. From top to bottom, the results correspond to 30, 45, and 60 projection angles. For each acquisition setting, the second row shows magnified details of the corresponding reconstructions.}
\label{fig:immagine_orizzontale}
\end{sidewaysfigure}

Figure~\ref{fig:immagine_orizzontale} provides a qualitative comparison of the reconstruction methods for the 30-, 45-, and 60-view geometries, shown from top to bottom, respectively. As expected, the reconstruction quality progressively improves as the number of projection angles increases, with better-defined anatomical structures and a reduction of streaking and blurring artifacts.
For the most challenging 30-view geometry, FBP and SIRT are strongly affected by reconstruction artifacts, whereas $\ell_2$-TV suppresses most of the streaks but produces an overly smoothed appearance. The diffusion-based methods provide more anatomically plausible reconstructions, although DPS and PS+ show limited structural fidelity. DiffPIR, DDNM, and CPF-DDNM recover the main anatomical structures more accurately, with CPF-DDNM providing slightly sharper boundaries and better preservation of fine details. 
The differences among the methods become clearer for the 45- and 60-view geometries. As more projection information becomes available, CPF-DDNM more effectively combines measurement consistency with the learned prior. 
These qualitative observations are consistent with the quantitative results reported in Table~\ref{tab:metrics_SpCT}. 
Overall, these results confirm the effectiveness of consecutive posterior fusion in improving the reconstruction quality of the original DDNM baseline.

\begin{figure}[ht]
\centering
\includegraphics[width=0.7\linewidth]{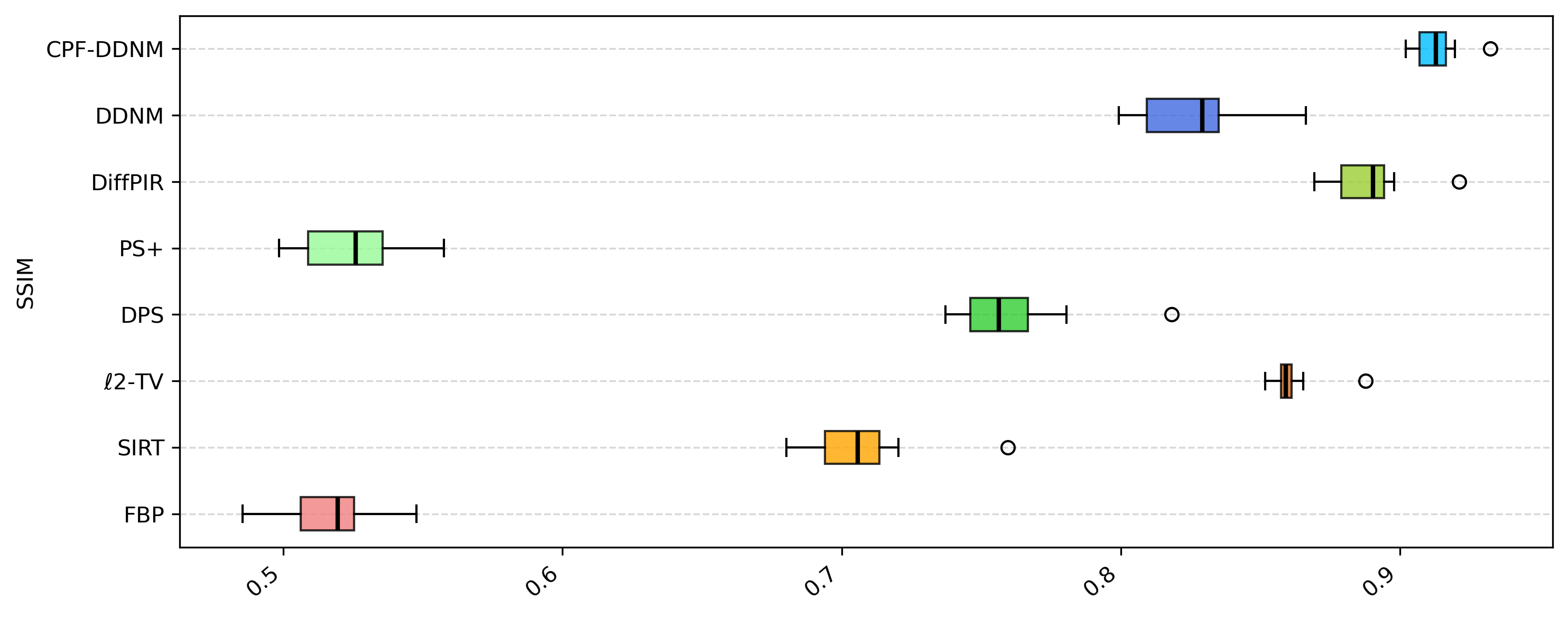}\\
\includegraphics[width=0.7\linewidth]{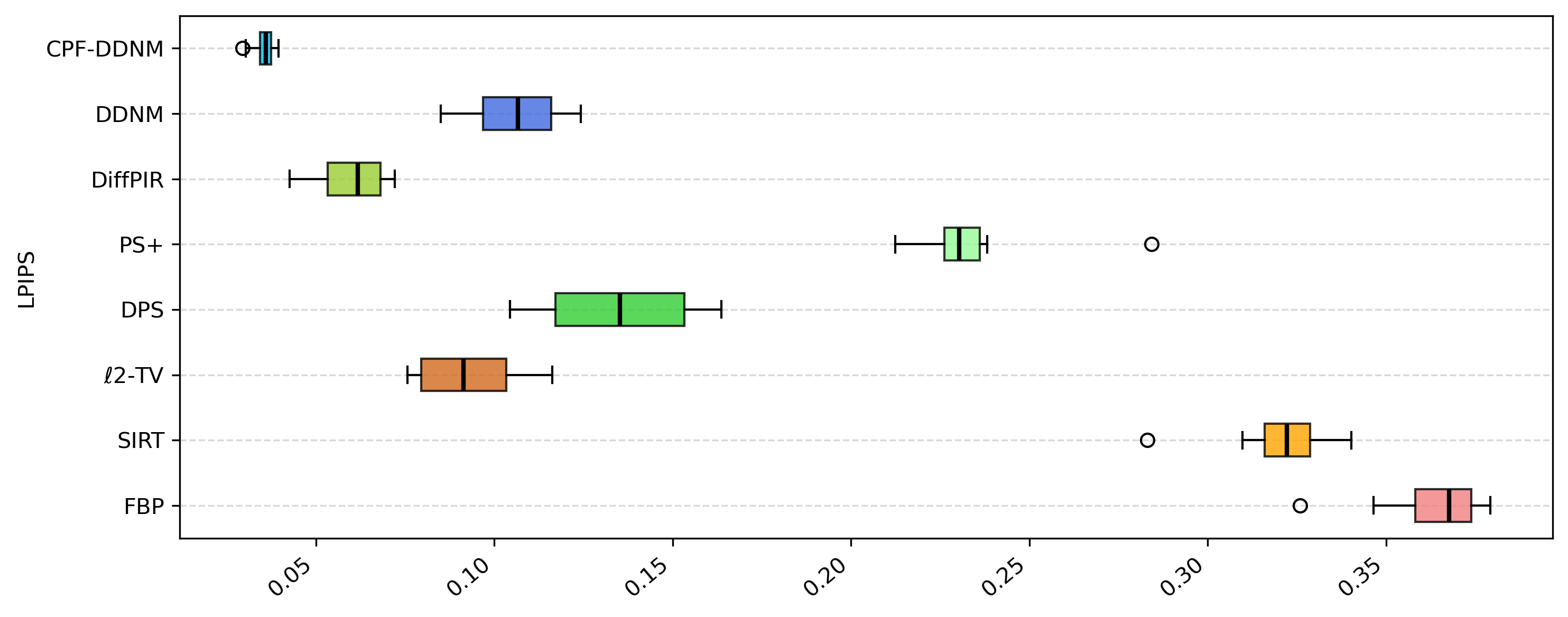}
\vspace{1mm}
\setlength{\tabcolsep}{0.5mm}
\begin{tabular}{@{}ccccc@{}}
\footnotesize FBP &
\footnotesize $\ell_2$-TV &
\footnotesize DiffPIR &
\footnotesize DDNM &
\footnotesize CPF-DDNM
\\
\includegraphics[width=0.185\linewidth]{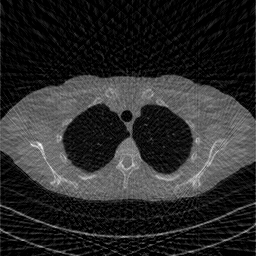} &
\includegraphics[width=0.185\linewidth]{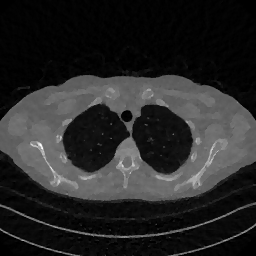} &
\includegraphics[width=0.185\linewidth]{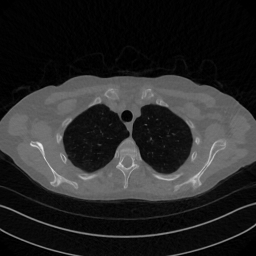} &
\includegraphics[width=0.185\linewidth]{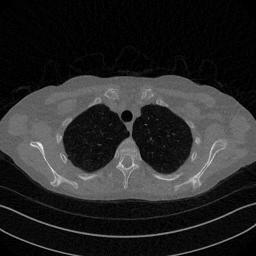} &
\includegraphics[width=0.185\linewidth]{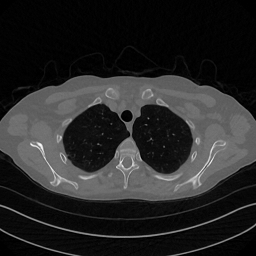}
\\ \noalign{\vskip -1mm}

\includegraphics[
    trim=35mm 45mm 25mm 30mm,
    clip,
    width=0.185\linewidth
]{img/img01_45viste_fbp.png} &
\includegraphics[
    trim=35mm 45mm 25mm 30mm,
    clip,
    width=0.185\linewidth
]{img/img01_45viste_tv.png} &
\includegraphics[
    trim=35mm 45mm 25mm 30mm,
    clip,
    width=0.185\linewidth
]{img/img01_45viste_diffpir.png} &
\includegraphics[
    trim=35mm 45mm 25mm 30mm,
    clip,
    width=0.185\linewidth
]{img/img01_45viste_ddnm.png} &
\includegraphics[
    trim=35mm 45mm 25mm 30mm,
    clip,
    width=0.185\linewidth
]{img/img01_45viste_cpf.png}
\end{tabular}
\caption{Quantitative and qualitative comparison for the 45-view sparse-CT geometry. The boxplots show the distributions of SSIM and LPIPS values computed over the entire test set across five runs with different initial noise realizations. Representative reconstructions and enlarged regions of interest are shown for visual comparison.}
\label{fig:SpCT_45}
\end{figure}
\clearpage

Figure~\ref{fig:SpCT_45} provides a more detailed analysis of the 45-view
geometry and is divided into two complementary parts. The upper part reports the boxplots of the SSIM and LPIPS values computed over the entire test set and across five runs with different initial noise realizations. CPF-DDNM achieves the best overall distributions, with higher SSIM values and lower LPIPS values than the competing methods. In particular, the improvement over DDNM is clearly visible for both metrics. 
The lower part presents a further qualitative comparison on a test image different from the one shown in Figure~\ref{fig:immagine_orizzontale}. The corresponding enlarged regions of interest highlight the differences in the recovery of the low-contrast area by the considered methods. 

\subsection{Ablation Study on the Fusion Coefficient}
\label{ssec:ablation_beta}

We investigated the influence of the fusion coefficient $\beta_t$ on the reconstruction dynamics by considering both constant coefficients and time-dependent schedules. The analysis was performed on sparse-view CT reconstruction with $45$ projection angles. For each choice of $\beta_t$, we monitored the evolution of SSIM, LPIPS, PSNR, and the squared data residual along the reverse diffusion trajectory averaged over the training set.

Following the geometric interpretation introduced in Section~\ref{ssec:cpf_ddnm}, we first considered the constant values
$$
\beta_t=\beta, \qquad \beta\in\{0,0.5,1,1.5\}.
$$
Here, $\beta=0$ corresponds to the non-fused DDNM approach, values in $(0, 1]$
reflects an interpolative regimes, whereas $\beta=1.5$ produces an extrapolative update. We additionally evaluated three time-dependent schedules. Let
\begin{align}
    \label{eq:normalized_progress}
    s_t = 1-\frac{t}{T}\in[0,1]
\end{align}
denote the normalized progress of the reverse diffusion process. Thus, $s_t=0$ at the beginning of sampling, when $t=T$, and $s_t=1$ at the end of the reverse trajectory, when $t=0$. The three schedules are defined as
\begin{align*}
    \text{Schedule A:}\qquad \beta_t^{A} &=1-\sqrt{s_t}, \\
    \text{Schedule B:}\qquad \beta_t^{B} &=1.8\left(1-\sqrt{s_t}\right), \\
    \text{Schedule C:}\qquad \beta_t^{C} &=1.8-0.6\sqrt{s_t}.
\end{align*}

All three schedules monotonically decrease the fusion coefficient as the reverse process approaches the clean image. However, they differ in the range of values attained by the fusion coefficient. Schedule~A satisfies
\begin{align*}
0\leq\beta_t^{A}\leq1,
\end{align*}
and therefore remains interpolative throughout the reverse process.
Schedule~B starts from the extrapolative value $\beta_T^{B}=1.8$, crosses the threshold $\beta_t=1$, and approaches zero toward the end of the reverse trajectory. Schedule~C instead satisfies
\begin{align*}
    1.2\leq\beta_t^{C}\leq1.8,
\end{align*}
and therefore preserves the extrapolative regime throughout the entire reverse trajectory while progressively reducing its strength.\\

\begin{figure}[ht]
    \centering
    \includegraphics[width=0.48\linewidth]{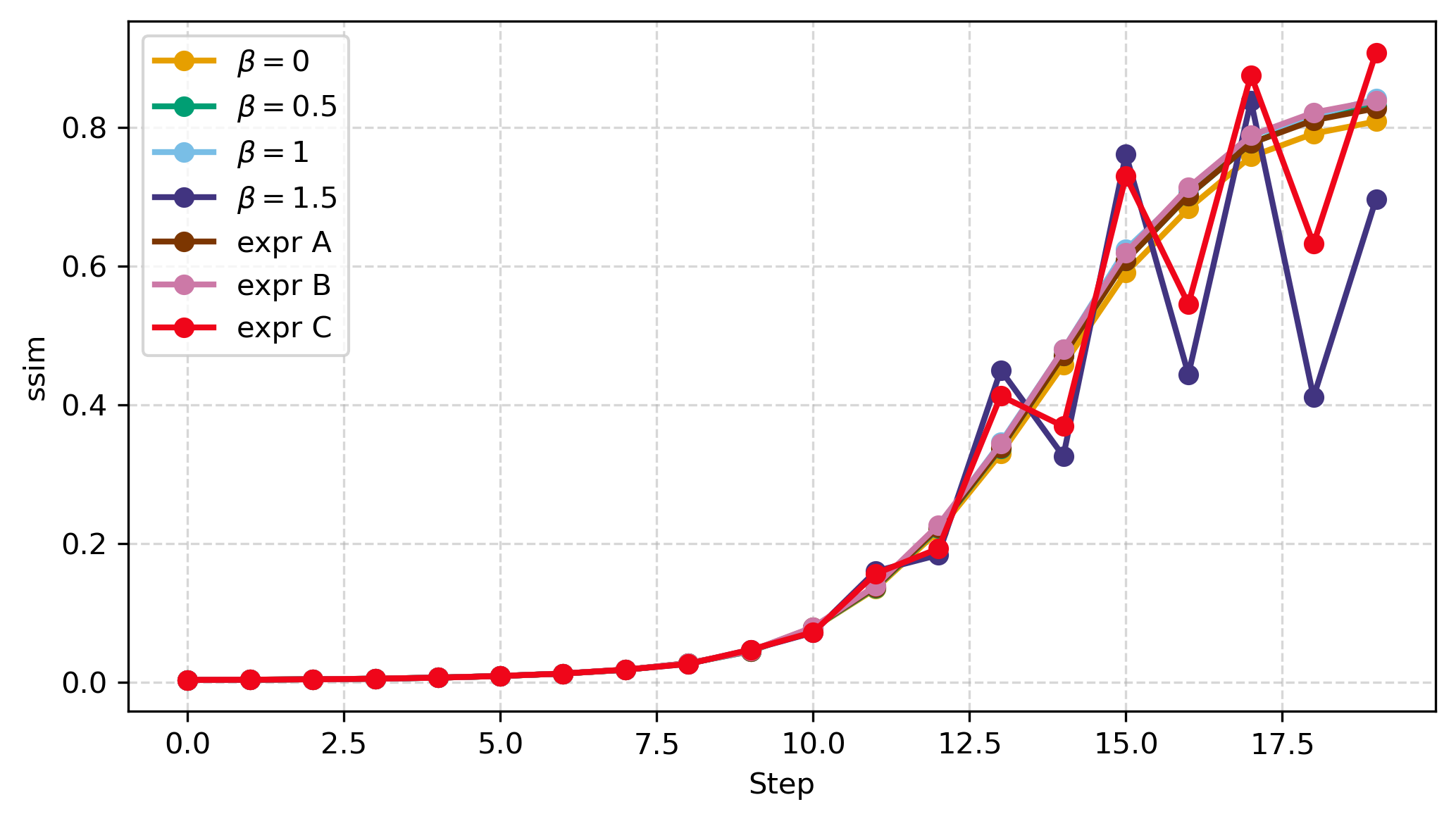}
    \includegraphics[width=0.48\linewidth]{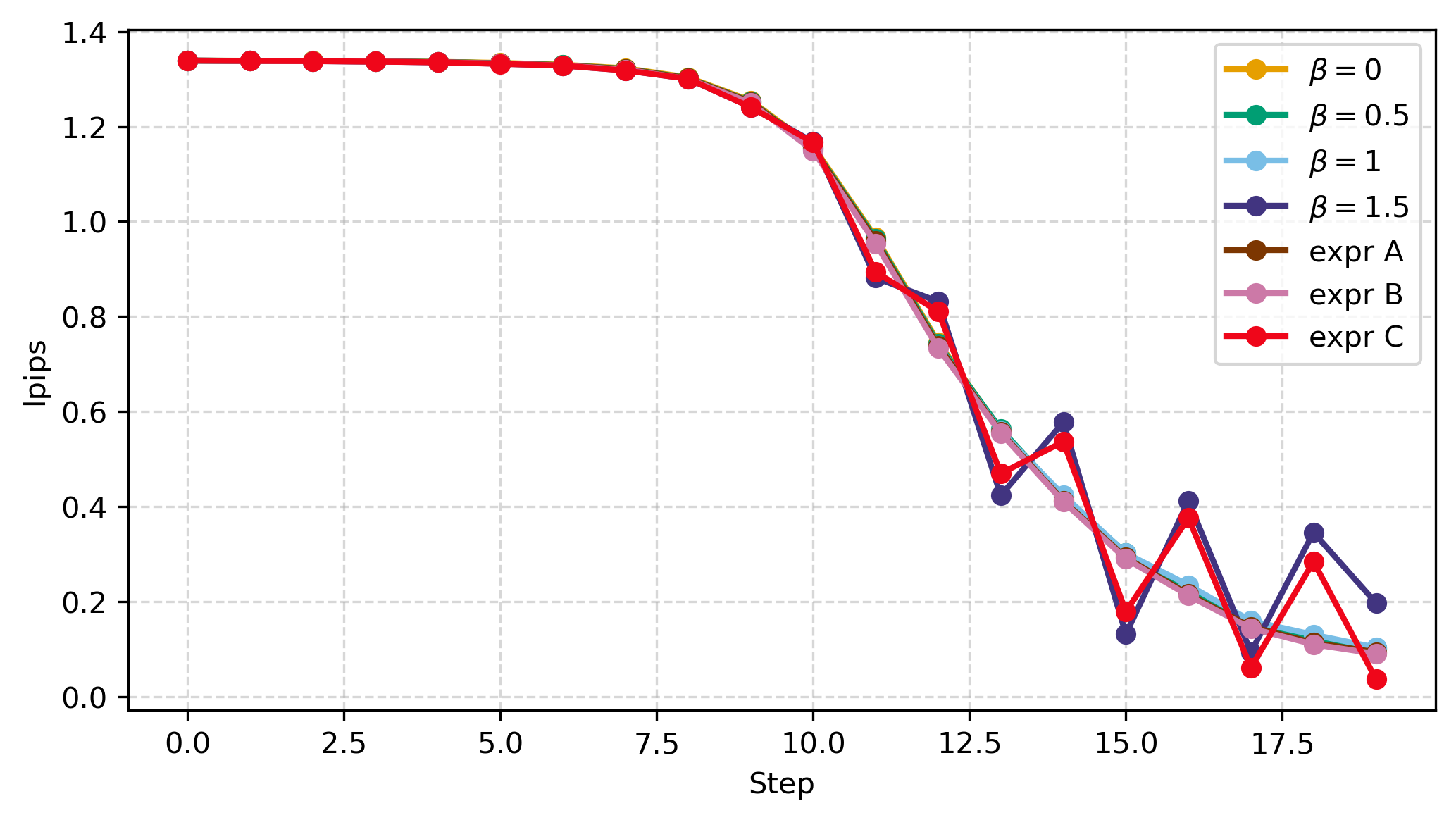}
    \includegraphics[width=0.48\linewidth]{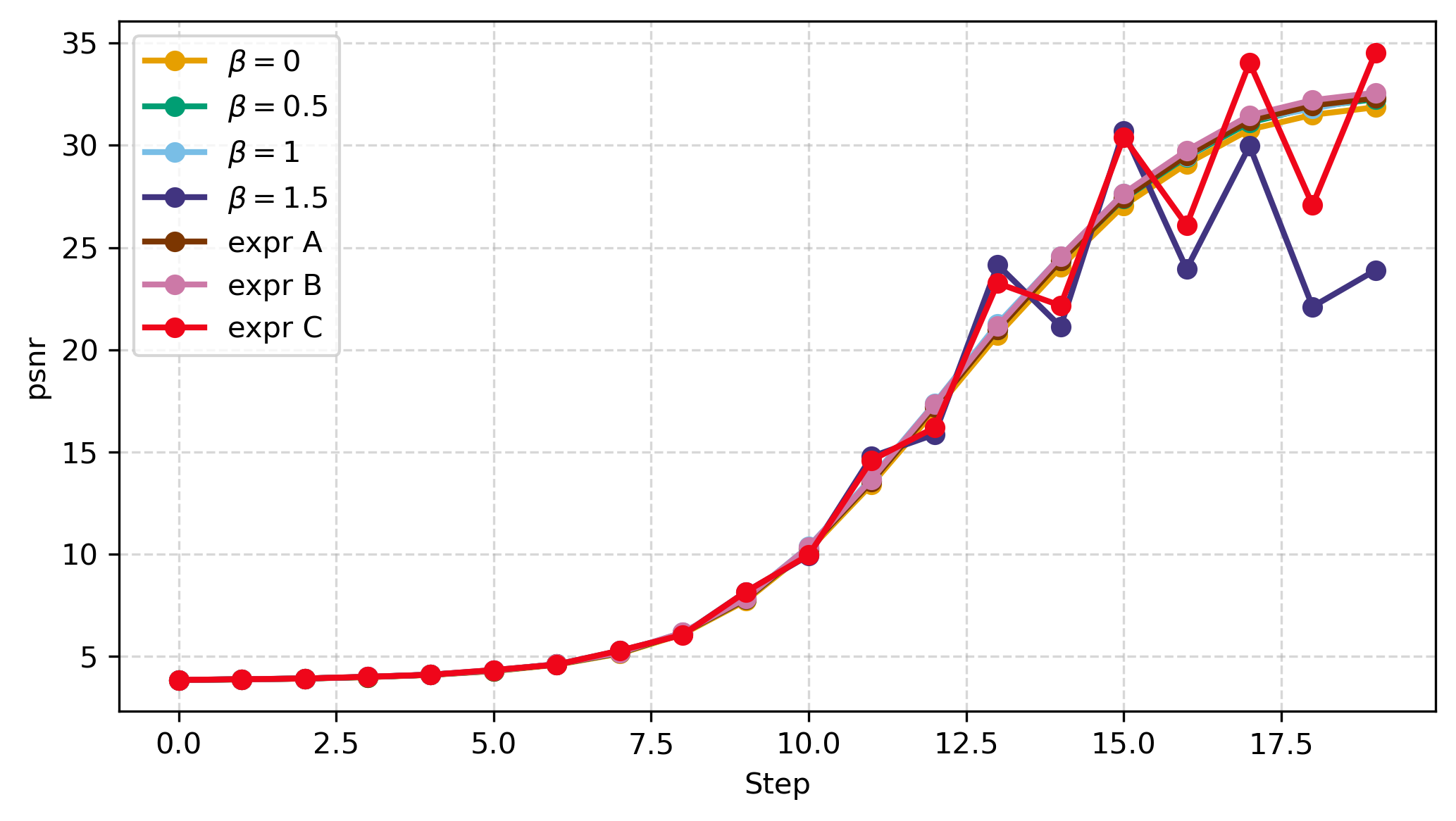}
    \includegraphics[width=0.48\linewidth]{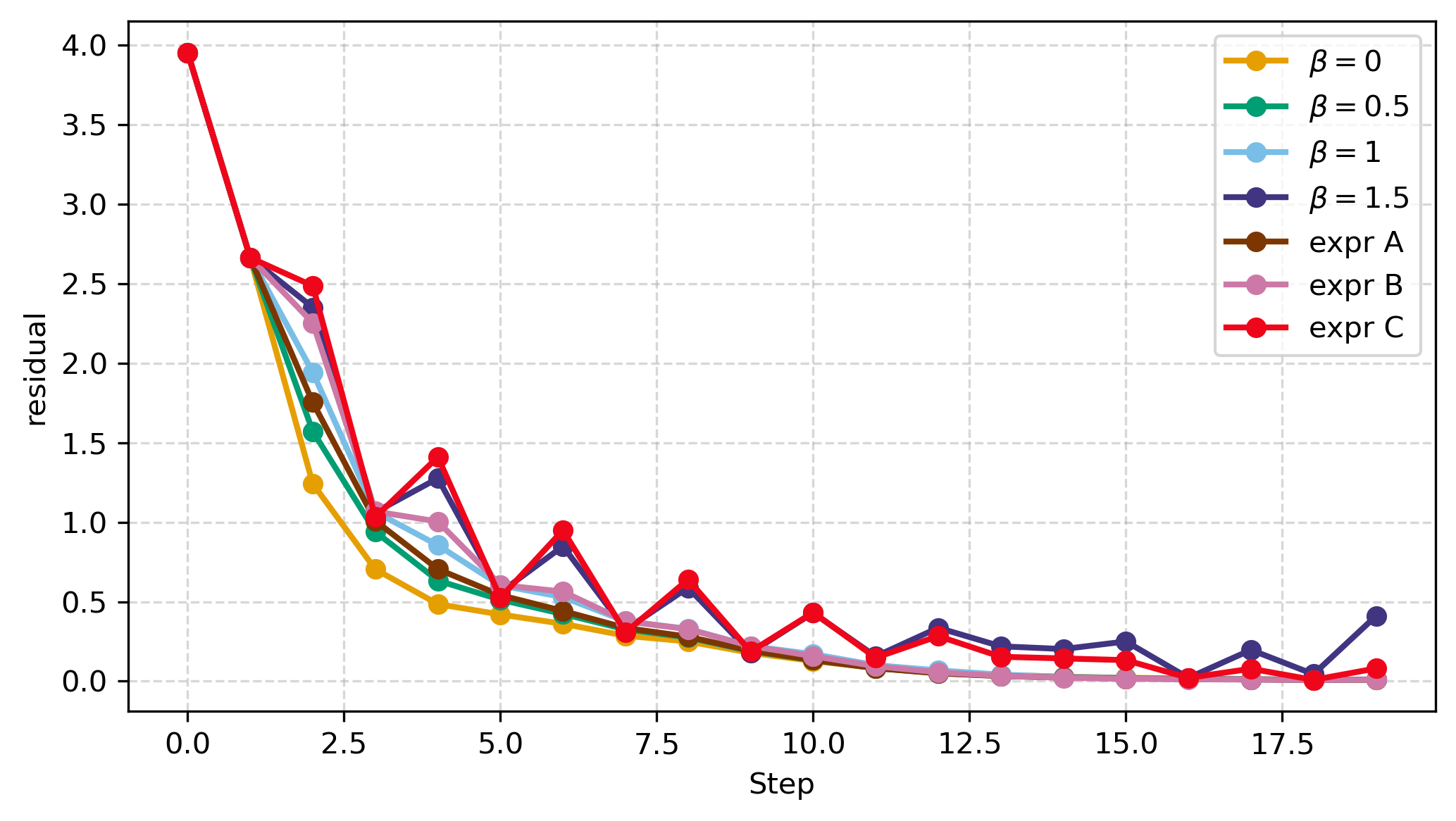}
    \caption{Evolution of SSIM, LPIPS, PSNR, and the squared data residual along the reverse diffusion trajectory for different constant coefficients and time-dependent schedules of $\beta_t$. Results refer to sparse-view CT reconstruction with $45$ projection angles.}
    \label{fig:ablation_beta}
\end{figure}

Figure~\ref{fig:ablation_beta} compares the reconstruction dynamics obtained with the different choices of $\beta_t$. During the initial part of the reverse trajectory, the metric curves are very similar. 
At these early stages of the reverse process, characterized by high diffusion noise, the consecutive denoised estimates still contain incomplete structural information, and their fusion has only a limited effect. The differences among the configurations become more evident during the final reverse steps, when the posterior estimates are more informative and their displacement defines a more meaningful reconstruction direction.

The constant choices $\beta_t \leq 1$ generally produce smoother trajectories than the extrapolative configurations. Since these choices do not introduce over-relaxation, their trajectories remain closer to those obtained with the standard DDNM update. 

A different behavior is observed for the constant coefficient $\beta_t=1.5$ and for Schedule~C, both of which remain extrapolative throughout the reverse process. In this regime, the additional term $\m_t = (\beta_t-1)\left(\D_{t+\Delta t}-\D_t\right)$ moves the reconstruction beyond the preceding posterior estimate along the direction identified by two consecutive reverse iterations. This over-relaxed update produces visible oscillations in SSIM, LPIPS, and PSNR during the final part of the trajectory. Nevertheless, the favorable phases of these oscillations reach higher SSIM and PSNR values and lower LPIPS values than the interpolative configurations.

The comparison between the constant choice $\beta_t=1.5$ and Schedule~C indicates that maintaining $\beta_t>1$ is not sufficient by itself.  Although both choices produce an oscillatory behavior, the constant coefficient may end the reverse process in an unfavorable phase,  resulting in a degradation of the final reconstruction. Schedule~C progressively reduces the extrapolation strength from $1.8$ to $1.2$ and attains the best final image-quality metrics among the considered configurations. 
This result motivates the use of Schedule~C in all the CT experiments reported in this work.

The squared data residual exhibits an overall decreasing trend for all configurations, although the extrapolative choices introduce noticeable oscillations during the final reverse steps. 
For some extrapolative configurations, however, the last reverse iteration may occur during an unfavorable phase of the oscillatory dynamics, leading to worse image-quality metrics than those obtained at the preceding iteration. Interestingly, the validation experiments showed that the iteration with the smallest data residual did not necessarily correspond to the best image-quality metrics. This observation suggests that the residual trajectory may contain useful information for identifying favorable terminal iterates, although the design of a systematic stopping criterion is left for future work. 

Overall, the ablation study suggests that a properly modulated extrapolative schedule can exploit the directional information contained in consecutive posterior estimates more effectively than either a purely interpolative fusion or a constant over-relaxation coefficient.

\subsection{Low-dose CT reconstruction}

In this section, we present the results obtained in the experiments simulating low-dose acquisitions as described in Section \ref{subsec:ct_reconstruction}. Figure \ref{fig:lowdose_barcharts} evaluates the robustness of the reconstruction methods to different noise levels. The bar plots report the mean SSIM and  LPIPS values obtained over the entire test set and across five runs with different initial noise realizations. The whiskers represent the corresponding standard deviations.\\
As expected, reconstruction quality generally deteriorates as the noise intensity increases, as indicated by decreasing SSIM values and increasing LPIPS values. CPF-DDNM achieves the highest SSIM values across all the considered noise levels. Regarding LPIPS, DiffPIR provides the best performance at the lowest noise level, $\delta=0.1$, whereas CPF-DDNM achieves the lowest values for $\delta=0.3$ and $\delta=0.5$. Interestingly, the relative advantage of CPF-DDNM becomes more pronounced as the measurement noise increases, particularly at the highest noise level. These results indicate that consecutive posterior fusion improves the robustness of the original DDNM formulation and remains effective under increasingly severe measurement noise. The qualitative results confirm that CPF-DDNM yields the best reconstruction, as further highlighted by the magnified view of a low-contrast region. In particular, noise is effectively suppressed while preserving sharp and well-defined object boundaries.

\begin{figure}[!ht]
\centering

\includegraphics[width=0.47\linewidth]{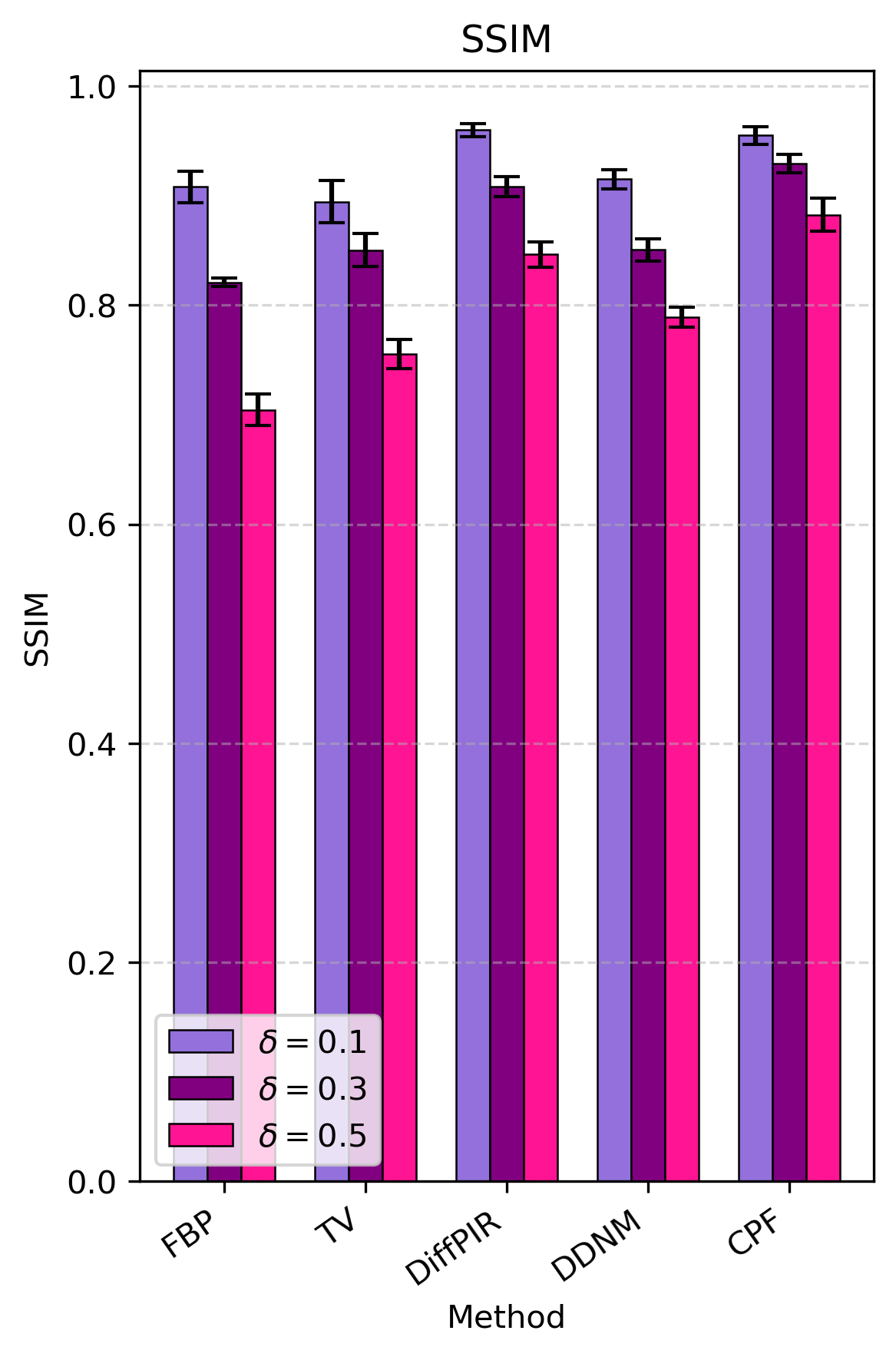}
\includegraphics[width=0.47\linewidth]{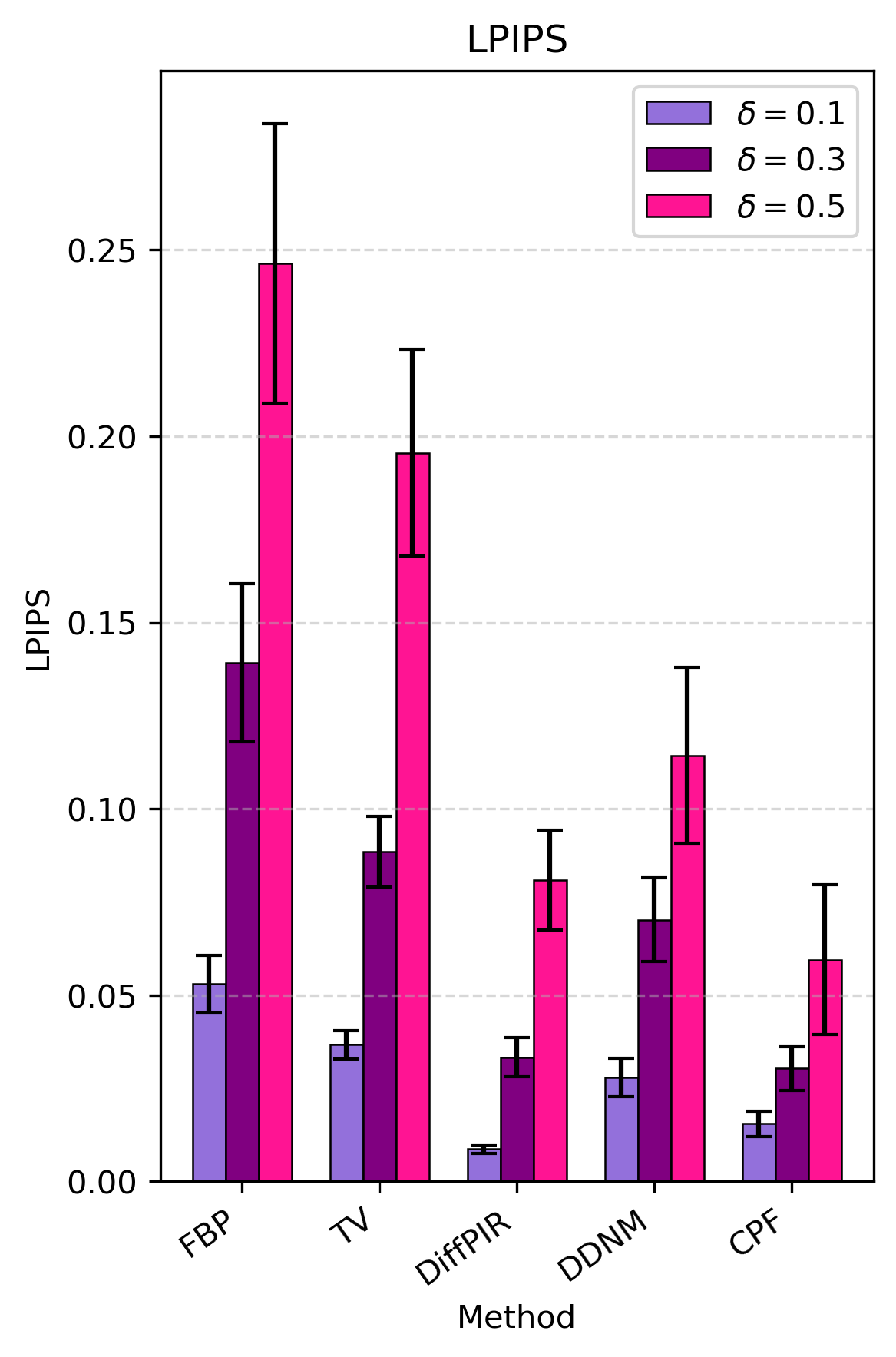}

\vspace{1mm}

\setlength{\tabcolsep}{0.5mm}
\begin{tabular}{@{}ccccc@{}}
\footnotesize FBP &
\footnotesize $\ell_2$-TV &
\footnotesize DiffPIR &
\footnotesize DDNM &
\footnotesize CPF-DDNM
\\

\includegraphics[width=0.185\linewidth]{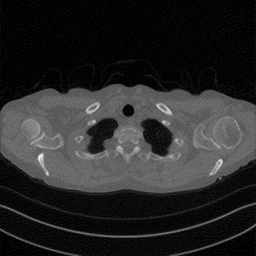} &
\includegraphics[width=0.185\linewidth]{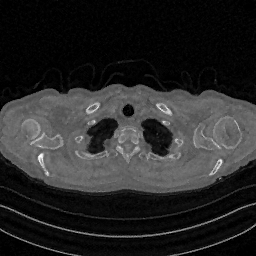} &
\includegraphics[width=0.185\linewidth]{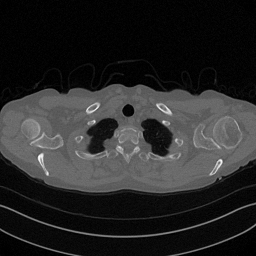} &
\includegraphics[width=0.185\linewidth]{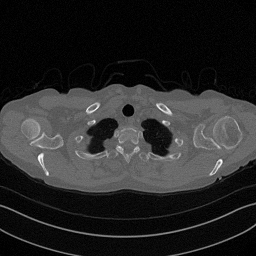} &
\includegraphics[width=0.185\linewidth]{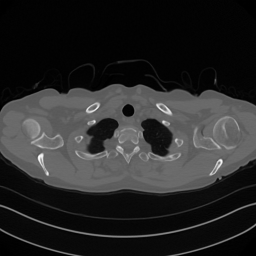}
\\ \noalign{\vskip -1mm}

\includegraphics[
    trim=20mm 25mm 40mm 50mm,
    clip,
    width=0.185\linewidth
]{img/img07_lowdose_fbp.png} &
\includegraphics[
    trim=20mm 25mm 40mm 50mm,
    clip,
    width=0.185\linewidth
]{img/img07_lowdose_tv.png} &
\includegraphics[
    trim=20mm 25mm 40mm 50mm,
    clip,
    width=0.185\linewidth
]{img/img07_lowdose_diffpir.png} &
\includegraphics[
    trim=20mm 25mm 40mm 50mm,
    clip,
    width=0.185\linewidth
]{img/img07_lowdose_ddnm.png} &
\includegraphics[
    trim=20mm 25mm 40mm 50mm,
    clip,
    width=0.185\linewidth
]{img/img07_lowdose_cpf.png}
\end{tabular}

\caption{Robustness to measurement noise in low-dose CT reconstruction. The upper part reports the mean SSIM and LPIPS values over the complete test set for noise levels $\delta=0.1$, $0.3$, and $0.5$, with whiskers indicating the corresponding standard deviations. The lower part presents representative reconstructions and enlarged regions of interest for $\delta=0.3$, comparing the preservation of anatomical structures and fine details across the considered methods.}
\label{fig:lowdose_barcharts}
\end{figure}

\subsection{ Super-resolution of CT images}

\begin{table}[ht]
\centering
\caption{Quantitative comparison for $\times 4$ super-resolution in the noiseless setting and with noise level $\delta=0.01$. Results are reported as mean $\pm$ standard deviation over the test images, after averaging each metric over five random initializations per image. Bold values indicate the best result for each setting.}
\label{tab:metrics_SuperRes}

\footnotesize
\setlength{\tabcolsep}{3pt}
\renewcommand{\arraystretch}{1.05}

\begin{tabular}{@{}llccc@{}}
\toprule
\textbf{Noise} & \textbf{Method}
& \textbf{SSIM $\uparrow$}
& \textbf{LPIPS $\downarrow$}
& \textbf{PSNR $\uparrow$} \\
\midrule

\multirow{5}{*}{$\delta=0$}
& DPS
& $0.497 \pm 0.055$
& $0.288 \pm 0.017$
& $19.60 \pm 1.28$ \\

& PS+
& $0.774 \pm 0.049$
& $0.117 \pm 0.015$
& $28.40 \pm 1.66$ \\

& DiffPIR
& $0.863 \pm 0.028$
& $0.093 \pm 0.014$
& $30.18 \pm 1.32$ \\

& DDNM
& $0.856 \pm 0.031$
& $\bm{0.076 \pm 0.014}$
& $\bm{31.40 \pm 1.60}$ \\

& CPF-DDNM
& $\bm{0.872 \pm 0.028}$
& $0.101 \pm 0.014$
& $31.35 \pm 1.60$ \\

\midrule

\multirow{5}{*}{$\delta=0.01$}
& DPS
& $0.495 \pm 0.055$
& $0.288 \pm 0.017$
& $19.60 \pm 1.28$ \\

& PS+
& $0.771 \pm 0.055$
& $0.118 \pm 0.014$
& $28.23 \pm 1.60$ \\

& DiffPIR
& $\bm{0.839 \pm 0.028}$
& $\bm{0.111 \pm 0.019}$
& $\bm{29.59 \pm 1.27}$ \\

& DDNM
& $0.808 \pm 0.028$
& $0.131 \pm 0.017$
& $29.03 \pm 1.30$ \\

& CPF-DDNM
& $0.828 \pm 0.027$
& $0.142 \pm 0.014$
& $29.19 \pm 1.39$ \\

\bottomrule
\end{tabular}%

\end{table}
The final numerical experiment evaluates the considered methods for the $\times 4$ image super-resolution setting described in Section~\ref{sec:expsetup}. Table~\ref{tab:metrics_SuperRes} reports the mean SSIM, LPIPS, and PSNR values computed over the complete test set, while Figure~\ref{fig:superRes} shows two representative reconstructions for the noisy case.

In the noiseless setting, CPF-DDNM achieves the highest SSIM value and a PSNR essentially comparable to that of DDNM, indicating an effective preservation of structural information. \\
A different behavior is observed when measurement noise is introduced. According to the test-set averages reported in  Table~\ref{tab:metrics_SuperRes}, DiffPIR achieves the best values for all three metrics. CPF-DDNM nevertheless improves upon the original DDNM baseline in terms of SSIM and PSNR, although it obtains a higher LPIPS value. Thus, from a purely quantitative perspective, DiffPIR provides the best overall performance in the noisy setting.

The qualitative comparison in Figure~\ref{fig:superRes}, however, reveals a marked discrepancy with this quantitative ranking. In both displayed examples, the DiffPIR reconstructions appear noticeably smoother, with some low-contrast tissues being less clearly resolved in the enlarged regions of interest. CPF-DDNM instead produces sharper tissue boundaries and preserves several localized details that are less visible in the DiffPIR reconstructions.
Among the methods shown, CPF-DDNM therefore provides the most convincing visual results for these two noisy examples, despite the lower average metric scores reported in Table~\ref{tab:metrics_SuperRes}.
This discrepancy illustrates that global image-quality metrics such as LPIPS may also be insensitive to localized differences that are clinically or anatomically relevant.

\begin{figure}[!ht]
\centering

\setlength{\tabcolsep}{0.5mm}
\begin{tabular}{@{}ccccc@{}}
\footnotesize DPS &
\footnotesize PS+ &
\footnotesize DiffPIR &
\footnotesize DDNM &
\footnotesize CPF-DDNM
\\

\includegraphics[width=0.185\linewidth]{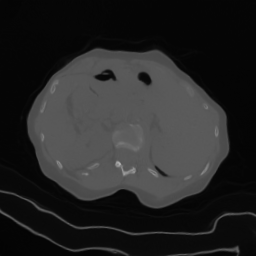} &
\includegraphics[width=0.185\linewidth]{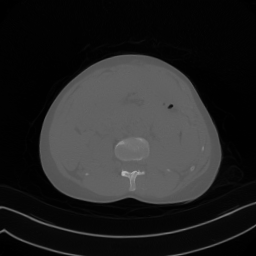} &
\includegraphics[width=0.185\linewidth]{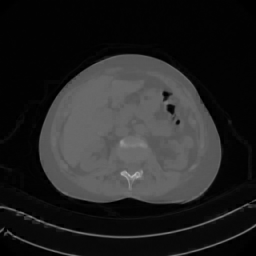} &
\includegraphics[width=0.185\linewidth]{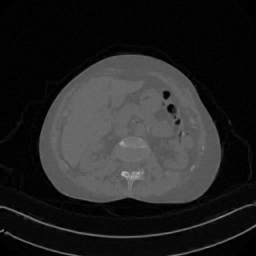} &
\includegraphics[width=0.185\linewidth]{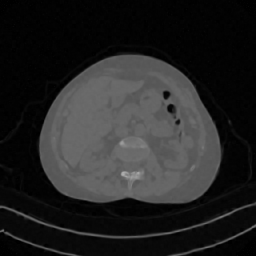}
\\ \noalign{\vskip -1mm}

\includegraphics[
    trim=40mm 45mm 20mm 30mm,
    clip,
    width=0.185\linewidth
]{img/img03_superres_dps.png} &
\includegraphics[
    trim=40mm 45mm 20mm 30mm,
    clip,
    width=0.185\linewidth
]{img/img03_superres_psplus.png} &
\includegraphics[
    trim=40mm 45mm 20mm 30mm,
    clip,
    width=0.185\linewidth
]{img/img03_superres_diffpir.png} &
\includegraphics[
    trim=40mm 45mm 20mm 30mm,
    clip,
    width=0.185\linewidth
]{img/img03_superres_ddnm.png} &
\includegraphics[
    trim=40mm 45mm 20mm 30mm,
    clip,
    width=0.185\linewidth
]{img/img03_superres_cpf.png}
\\[1mm]

\includegraphics[width=0.185\linewidth]{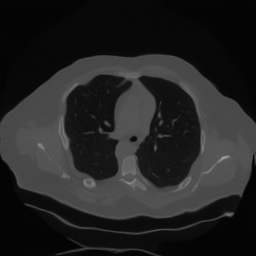} &
\includegraphics[width=0.185\linewidth]{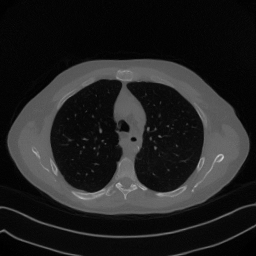} &
\includegraphics[width=0.185\linewidth]{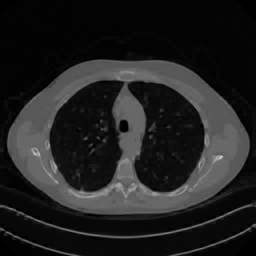} &
\includegraphics[width=0.185\linewidth]{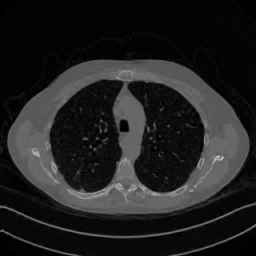} &
\includegraphics[width=0.185\linewidth]{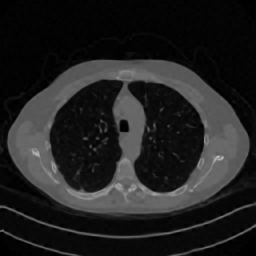}
\\ \noalign{\vskip -1mm}

\includegraphics[
    trim=30mm 15mm 30mm 60mm,
    clip,
    width=0.185\linewidth
]{img/img06_superres_dps.png} &
\includegraphics[
    trim=30mm 15mm 30mm 60mm,
    clip,
    width=0.185\linewidth
]{img/img06_superres_psplus.png} &
\includegraphics[
    trim=30mm 15mm 30mm 60mm,
    clip,
    width=0.185\linewidth
]{img/img06_superres_diffpir.png} &
\includegraphics[
    trim=30mm 15mm 30mm 60mm,
    clip,
    width=0.185\linewidth
]{img/img06_superres_ddnm.png} &
\includegraphics[
    trim=30mm 15mm 30mm 60mm,
    clip,
    width=0.185\linewidth
]{img/img06_superres_cpf.png}
\end{tabular}

\caption{Qualitative comparison of the different methods for $\times 4$ super-resolution in the presence of measurement noise with $\delta=0.01$. Two representative images from the test set are shown together with the corresponding reconstructions and enlarged regions of interest.}
\label{fig:superRes}
\end{figure}

\section{Conclusions}
\label{sec:conclusions}
In this work, we introduced CPF-DDNM, a lightweight inference-time strategy for improving the prior-driven recovery of unobservable image components in DDNM-based inverse solvers. The method exploits information already available along the reverse diffusion trajectory by combining consecutive measurement-aware estimates, without modifying or retraining the diffusion prior.

The range/null-space decomposition of DDNM provides a direct geometric interpretation of the proposed fusion. With an exact Moore--Penrose pseudo-inverse, consecutive DDNM corrections share the same measurement-determined component and differ only in the null-space of the forward operator. CPF-DDNM therefore modifies exclusively the prior-driven component while preserving the information determined by the measurements. Thus, for noiseless data, CPF-DDNM preserves exact data consistency. 
Our local error analysis further provides conditions under which the extrapolative optimal fusion coefficient can yield a small null-space recovery error.

Experiments demonstrate that the benefits of consecutive posterior fusion depend on both the forward operator and the fusion schedule. The largest gains were observed in sparse-view CT, where CPF-DDNM consistently improved upon DDNM across acquisition geometries and noise levels, and outperformed the considered diffusion-based inverse solvers. In super-resolution, CPF-DDNM improved structural similarity over DDNM, although DiffPIR achieved better average performance for some metrics in the noisy setting. The method was effective with both numerically approximated pseudo-inverses in CT and an analytically available pseudo-inverse in super-resolution.

CPF-DDNM requires no additional denoiser evaluations or pseudo-inverse operations compared with DDNM, adding only the storage of the preceding correction and a linear combination. The main remaining limitation is the selection of the fusion schedule, which is currently determined on validation data and depends on the forward operator. Future work will investigate adaptive inference-time strategies based on quantities such as consecutive correction variations and data residuals, as well as plugging the CPF strategy to different posterior sampling methods.



\bibliographystyle{plain}
\bibliography{biblio}
\end{document}